\documentclass[pdflatex,sn-nature]{sn-jnl}

\DeclareFontFamily{U}{mathx}{\hyphenchar\font45}
\DeclareFontShape{U}{mathx}{m}{n}{
      <5> <6> <7> <8> <9> <10>
      <10.95> <12> <14.4> <17.28> <20.74> <24.88>
      mathx10
      }{}
\DeclareSymbolFont{mathx}{U}{mathx}{m}{n}
\DeclareMathAccent{\widebar}{0}{mathx}{"73}
\DeclareFontSubstitution{U}{mathx}{m}{n}

\usepackage{bm}
\usepackage{mathtools}
\usepackage{amsthm}
\usepackage{amssymb}
\usepackage{graphicx}
\usepackage{algorithm}
\usepackage{algpseudocode}
\usepackage[most]{tcolorbox}
\usepackage{enumitem}

\newtheorem*{assumption*}{Assumption}
\newtheorem{theorem}{Theorem}
\newtheorem{lemma}{Lemma}
\newtheorem{corollary}{Corollary}

\makeatletter
\newtheorem*{rep@theorem}{\rep@title}
\newcommand{\newreptheorem}[2]{%
\newenvironment{rep#1}[1]{%
 \def\rep@title{#2 \ref{##1}}%
 \begin{rep@theorem}}%
 {\end{rep@theorem}}}
\makeatother

\newreptheorem{theorem}{Theorem}
\newreptheorem{lemma}{Lemma}
\newreptheorem{corollary}{Corollary}
\newreptheorem{proposition}{Proposition}

\newcommand\ci{\perp\!\!\!\perp}

\begin{document}

\title[Global Interpretability via Automated Preprocessing]{Global Interpretability via Automated Preprocessing: A Framework Inspired by Psychiatric Questionnaires}

\affil[1]{\orgdiv{Department of Biomedical Informatics}, \orgname{University of Pittsburgh}}

\author*[1]{\fnm{Eric} \sur{V. Strobl}}

\abstract{
Psychiatric questionnaires are highly context sensitive and often only weakly predict subsequent symptom severity, which makes the prognostic relationship difficult to learn. Although flexible nonlinear models can improve predictive accuracy, their limited interpretability can erode clinical trust. In fields such as imaging and omics, investigators commonly address visit- and instrument-specific artifacts by extracting stable signal through preprocessing and then fitting an interpretable linear model. We adopt the same strategy for questionnaire data by decoupling preprocessing from prediction: REFINE (Redundancy-Exploiting Follow-up-Informed Nonlinear Enhancement) uses follow-up information to learn a nonlinear, item-aligned preprocessing of baseline measurements, and then maps these stabilized baseline items to future severity through a linear coefficient matrix. This construction constrains where nonlinearity enters rather than restricting the end-to-end predictive function class, allowing the prognostic relationship to remain globally interpretable through a single coefficient matrix rather than through post hoc local attributions. In the population, the resulting predictor recovers the Bayes-optimal conditional mean, so global linear interpretability does not require sacrificing nonlinear predictive flexibility. In experiments, REFINE outperforms other interpretable approaches while preserving clear global attribution of prognostic factors across psychiatric and non-psychiatric longitudinal prediction tasks.}
\keywords{Interpretable machine learning, Longitudinal prediction,
Clinical questionnaires, Preprocessing, Psychiatry}

\maketitle

\section{Introduction}
Psychiatric questionnaires, such as the Hamilton Depression Rating Scale (HAM-D) for depression \citep{Hamilton67} and the Positive and Negative Syndrome Scale (PANSS) for psychosis \citep{Kay87}, measure multiple symptom dimensions rather than a single construct. Predicting future questionnaire responses is therefore important for characterizing longitudinal changes in mental illness and informing clinical decision-making. Crucially, the clinical objective is often broader than forecasting a single aggregate endpoint such as a total score. Patients with similar total scores can exhibit markedly different symptom profiles, and these differences can imply distinct monitoring priorities and clinical concerns \citep{Strobl25_JAFD,Strobl24_SV}. A clinically useful prognostic model should therefore forecast the full symptom vector at future follow-up times in a unified and interpretable manner, so that clinicians can assess how all symptom dimensions are expected to change.

Satisfying this objective is challenging because item-level questionnaire responses are often noisy, and different symptoms can follow distinct, complex, and nonlinear trajectories over time. As a result, accurate forecasting often requires flexible nonlinear models \citep{Quinn24}. However, these models can be difficult to interpret, which limits clinicians' ability to assess validity and can erode trust even when predictive performance is strong. Investigators often address this limitation with local attribution tools (e.g., SHAP values \citep{Lundberg17}) or instance-specific effect models (e.g., varying-coefficient models \citep{Hastie93}), but these approaches can produce patient-specific explanations that vary substantially across individuals. When the apparent importance of predictors shifts markedly from one patient to another, it becomes difficult to extract a coherent global picture of the prognostic relationship. Moreover, this problem is amplified in psychiatric questionnaires because the outcome is a vector of correlated items, and explanations can fluctuate not only across patients but also across symptom dimensions. 

In many other scientific domains, investigators address noisy nonlinear relationships through \emph{preprocessing} and then fit a \textit{globally} interpretable linear model whose coefficients have the same meaning across patients. In neuroimaging, for example, preprocessing typically removes non-brain tissue (e.g., dura, skull, and cerebrospinal fluid), suppresses high-frequency artifacts, and often applies spatial smoothing to promote local continuity across neighboring voxels \citep{Smith04}. In transcriptomics, researchers remove outliers, discard low-quality samples, and normalize read counts to account for the multinomial-like behavior of sequencing data \citep{Conesa16}. From a machine learning perspective, such steps are highly nonlinear but not essential in the infinite-sample limit, because sufficiently expressive models can, in principle, learn comparable transformations directly from raw inputs. Scientifically, however, preprocessing serves an additional purpose beyond sample efficiency: it can make downstream linear models substantially more interpretable. For instance, after spatial smoothing, linear models trained on neuroimaging data often produce spatially coherent coefficients, making parameters easier to align with known functional organization \citep{Jabakhanji22}.

These observations raise a parallel question in psychiatry: can we preprocess questionnaire items so that simple, globally interpretable linear models achieve predictive performance comparable to nonlinear models? We seek to confine nonlinearity to the preprocessing step, leaving a single linear map from preprocessed items to future outcomes whose coefficients have the same interpretation across patients. This preprocessing challenge is distinctive, however, because questionnaires offer few natural preprocessing cues compared with many other data modalities---for example, they lack spatial locality to exploit and do not admit obvious count-based normalizations.
\begin{tcolorbox}[enhanced,frame hidden,breakable]
In this paper, we carefully address the above question through the following contributions:
\begin{enumerate}[leftmargin=*]
\item We formalize two minimal requirements for a clinically interpretable preprocessing operator: it should preserve \textbf{longitudinal redundancy} and \textbf{item-level meaning}. These requirements provide principled guidance for preprocessing questionnaire items.
\item We introduce REFINE (Redundancy-Exploiting Follow-up-Informed Nonlinear Enhancement), which learns such a preprocessor by first using future measurements as privileged training-time supervision to enforce \emph{longitudinal redundancy}.
\item REFINE then confines nonlinearity to a preprocessing step in a way that maintains item alignment, so each coordinate retains its original \emph{item-level meaning}.
\item We show that the mapping from preprocessed items to future items is \textbf{exactly linear} and globally interpretable, while still recovering the \textbf{Bayes-optimal} conditional mean and therefore imposing no restriction on the population predictive target.

\item Empirically, REFINE improves upon other interpretable methods by placing flexibility in an explicit preprocessing step rather than imposing functional restrictions (e.g., linearity or tree structure) on the predictive pipeline.
\item Finally, using a non-psychiatric longitudinal dataset, we show that REFINE extends beyond questionnaires to other multivariate clinical
measurements.
\end{enumerate}
Overall, REFINE captures complex nonlinear relationships among multiple outcomes without sacrificing interpretability: it learns an item-aligned nonlinear preprocessing operator that exploits longitudinal redundancy to preserve task-relevant information, then delegates prediction to an exactly linear, globally interpretable, and Bayes-optimal decoder (Figure \ref{fig:main_idea}).
\end{tcolorbox}

\begin{figure}
    \centering
    \includegraphics[width=1\linewidth]{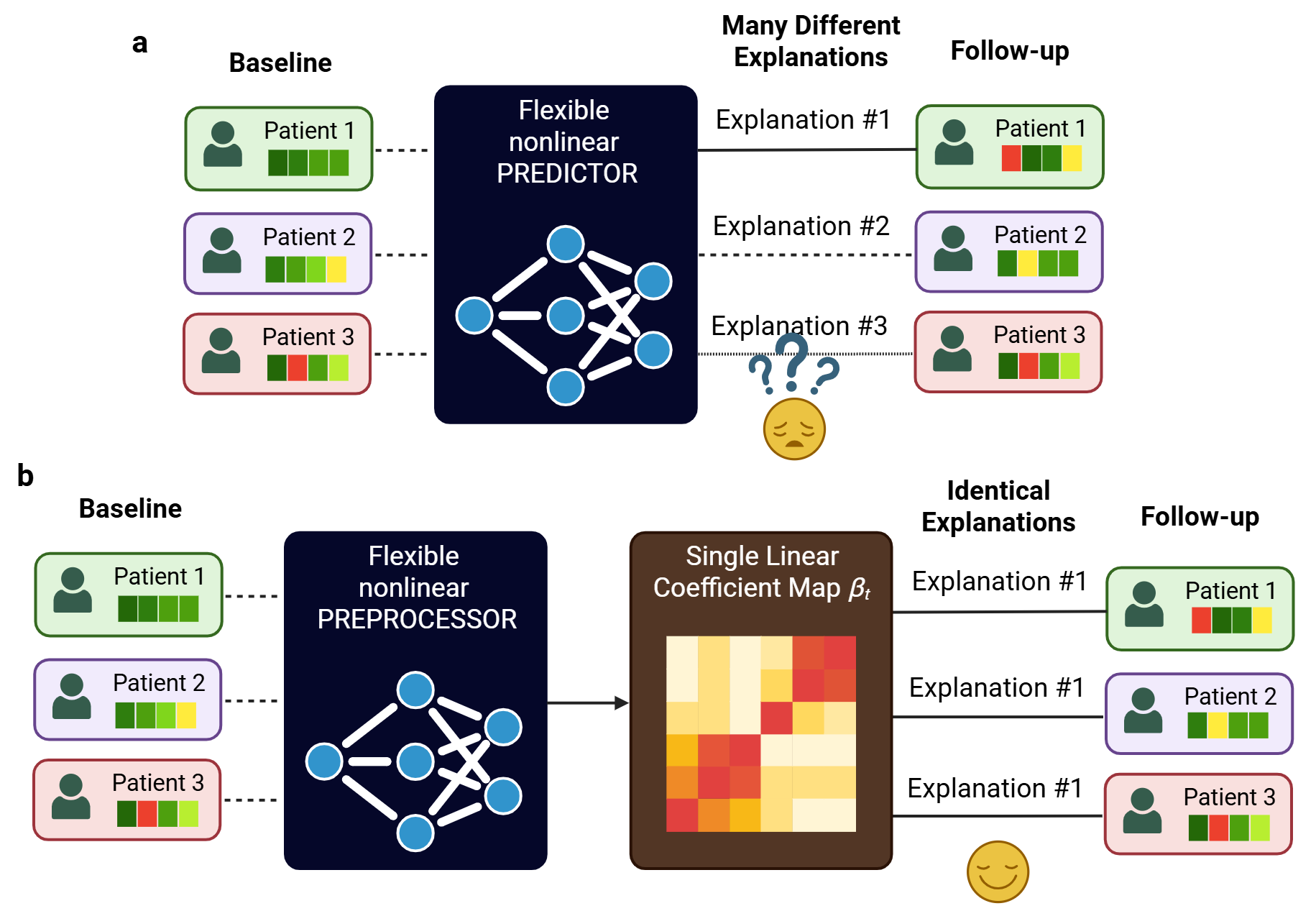}
\caption{\textbf{Main idea.} (a) A standard approach uses a flexible nonlinear model to predict follow-up outcomes from baseline measurements, but interpretation typically relies on post hoc local attribution methods, producing explanations that can differ across patients. (b) REFINE instead places nonlinear flexibility in an item-aligned preprocessing step. Crucially, this is not an arbitrary nonlinear representation, but one whose coordinates retain their interpretation as preprocessed versions of the original items. Prediction is then reserved for a single linear coefficient map \(\beta_t\) shared across patients. Patients may therefore have different preprocessed values and predictions, while the prognostic relationship itself retains the same global interpretation for everyone. In this way, REFINE preserves nonlinear predictive flexibility without sacrificing global interpretability.}
\label{fig:main_idea}

\end{figure}

\section{Related Work}

A vast literature studies modality-specific preprocessing strategies (e.g., imaging, omics, electronic health records, wearables) which we do not review here. In psychiatric research, questionnaire ``preprocessing'' is often implemented procedurally rather than formalized algorithmically, via structured interview protocols, rater training, double coding, discrepancy checks, and adjudication \citep{Rohan16}. A complementary psychometric tradition models item responses through latent-variable measurement models such as item response theory, which summarizes many items through one or more latent traits and supports denoised scoring and downstream inference \citep{Gorter15}. However, these latent traits add an additional interpretive layer and are often treated as scientific constructs in their own right, rather than as preprocessed representations of the original item-level measurements.

Interpretable machine learning for questionnaire-style tabular data has largely pursued transparency through instance-specific attribution of nonlinear models. Widely used local-attribution methods include SHAP values \citep{Lundberg17}, gradient-based approaches \citep{Sundararajan17}, and perturbation-based explanations \citep{Ribeiro16}. In high-dimensional questionnaire settings with many outcome variables, however, these methods can become difficult to use in practice. Explanations vary across patients \textit{and} across outcomes, which substantially increases interpretive burden. Moreover, attempts to ``globalize'' local attributions---for example, by averaging across patients or summarizing with aggregate statistics---are typically heuristic and may obscure clinically meaningful heterogeneity \citep{Lundberg20}.

A second line of work seeks transparency by constraining models to operate in the original item space. Representative examples include sparse linear models \citep{tibshirani1996lasso,zou2005elasticnet}, rule-based models \citep{letham2015brl,angelino2018corels}, and additive models such as GAMs and GA$^2$Ms \citep{hastie1990gam,lou2013ga2m}. Related hybrid approaches attach an interpretable linear component to a nonlinear model---for example, concept bottleneck models \citep{Koh20} or neural parameterizations of varying-coefficient models that produce instance-specific linear effects \citep{Richman23}. These approaches improve transparency, but they do not learn an explicit preprocessing operator that stabilizes measurements while preserving item-level meaning.


The closest method that we found to our approach also seeks to preserve global interpretability by producing an intermediate representation aligned with the input feature space \citep{Shi25}. Their model learns a nonlinear mapping that reconstructs an interpretable feature representation and then applies a linear predictor for classification; when trained end-to-end, the objective couples reconstruction fidelity and predictive accuracy, so improving one can degrade the other. In contrast, our preprocessing target is defined by longitudinal redundancy: in the population, the denoised representation and subsequent linear transformation recover the conditional mean exactly, so the method does not require tuning an explicit trade-off between preprocessing fidelity and predictive performance.

\section{Minimal Criteria for a Preprocessor}

Before introducing REFINE, we first clarify what we mean by preprocessing. Although preprocessing goals vary across domains, two requirements recur whenever preprocessing must support scientific interpretation as well as prediction. First, preprocessing should \textbf{retain information that is reproducible over time}. Because longitudinal redundancy is inherently symmetric, a useful baseline representation should remain predictive of future assessments, and future assessments should retain enough shared signal to reconstruct baseline-derived quantities. This criterion encourages preprocessing to suppress visit-specific fluctuations without discarding prognostically relevant structure. Importantly, forward predictive accuracy alone (or backward reconstruction alone) does not suffice: a representation can succeed in one direction by leveraging visit-specific artifacts---such as rater changes, shifts in administration context, or altered privacy (e.g., a family member present during the interview)---rather than isolating the stable component that persists across time.

Second, preprocessing should \textbf{preserve item-wise meaning}. A preprocessor may borrow strength across items---for example, by denoising each item using information from correlated items---but it should not alter what each coordinate represents. In particular, the output should remain aligned with the original item definitions, so that downstream coefficients can be interpreted directly in terms of the questionnaire items.

These principles mirror how preprocessing is used in other scientific pipelines. In genomics and other -omics settings, normalization and quality control aim to suppress
technical artifacts while preserving gene-level meaning. In imaging, artifact removal
and spatial smoothing reduce measurement noise and increase reproducibility, yet
analysts still interpret each coordinate as the same voxel (or anatomically defined
location) after preprocessing. Our goal is to achieve the same effect for questionnaire data: suppress transient measurement artifacts while maintaining an item-aligned representation that supports transparent downstream inference.

\section{Algorithm}

\subsection{Strategy}

Let $\bm Y_0$ denote the baseline vector, decomposed as $\bm Y_0=(\bm X_0,\bm Z)$, where $\bm X_0\in\mathbb R^{d}$ contains the questionnaire items at baseline and $\bm Z\in\mathbb R^{q}$ contains additional baseline covariates (e.g., age and sex). Further let $\bm X_t\in\mathbb R^{d}$ denote the follow-up questionnaire items at time $t$. We assume i.i.d. samples and centered variables throughout. REFINE targets prediction of $\bm X_{t>0}$ from baseline measurements $(\bm X_0,\bm Z)$.

In general, the conditional expectation can be nonlinear and observations include idiosyncratic variation:
\begin{equation}\label{eq:NL}
\bm X_t \;=\; \mathbb{E}\!\left(\bm X_t \mid \bm X_0,\bm Z\right) \;+\; \varepsilon_t,
\end{equation}
where \(\mathbb{E}(\varepsilon_t\mid \bm X_0,\bm Z)=0\). REFINE exploits redundancy across time to construct a representation in which this conditional mean factorizes as
\[
\mathbb{E}\!\left(\bm X_t \mid \bm X_0,\bm Z\right)=h_t(\bm X_0,\bm Z)\,\beta_t,
\]
where \(h_t(\bm X_0,\bm Z)\in\mathbb{R}^{d}\) captures nonlinear transformations that yield stabilized baseline item values and \(\beta_t\in\mathbb{R}^{d\times d}\) is a linear coefficient matrix mapping stabilized baseline items to follow-up items. This yields the partially linear model
\begin{equation}\label{eq:decomp}
\bm X_t \;=\; h_t(\bm X_0,\bm Z)\,\beta_t \;+\; \varepsilon_t.
\end{equation}
We design \(h_t(\bm X_0,\bm Z)\) to serve solely as an \emph{automatically learned preprocessing} step that preserves item-level meaning in the original item space, learned from data without manual feature engineering. Crucially, \(h_t\) is learned from longitudinal information: the follow-up assessment at time \(t\) provides supervision not only for the prediction target \(\mathbb{E}(\bm X_t \mid \bm X_0,\bm Z)\), but also for the preprocessing representation \(h_t(\bm X_0,\bm Z)\) itself. The primary scientific target remains the linear coefficients \(\beta_t\). As a result, after preprocessing, the problem reduces to a globally interpretable linear mapping, mirroring the role of preprocessing in many scientific workflows.

\subsection{Learning How to Preprocess}\label{sec:pred}
To obtain the decomposition in~\eqref{eq:decomp}, REFINE uses follow-up measurements during training. Suppose we observe follow-up questionnaire responses at times \(t\in\{t_1,\dots,t_T\}\). On the training set, we treat each \(\bm X_t\) as privileged information and regress baseline items on follow-up items:
$\bm X_0 \;=\; \bm X_t\,\bm B_t \;+\; \varepsilon_{0,t},$
where \(\bm B_t\in\mathbb{R}^{d\times d}\) is assumed invertible. 

Least squares estimation yields the follow-up--informed proxy
\[
\widebar{\bm X}_0^{(t)} \;=\; \bm X_t\,\bm B_t
\]
for each time point. We then set \(\beta_t \coloneqq \bm B_t^{-1}\). This choice implies the corresponding reconstruction
\[
\bm X_t \;=\; \widebar{\bm X}_0^{(t)}\,\beta_t.
\]
Thus, the map \(\widebar{\bm X}_0^{(t)}\mapsto \bm X_t\) is linear by construction.

We do not observe \(\bm X_t\) at test time, and therefore we cannot form \(\widebar{\bm X}_0^{(t)}=\bm X_t\bm B_t\) on the test set. Instead, for each \(t\) we learn a baseline estimator of \(\widebar{\bm X}_0^{(t)}\) by modeling the conditional expectation
\[
h_t(\bm X_0,\bm Z)\;=\;\mathbb{E}(\widebar{\bm X}_0^{(t)}\mid \bm X_0,\bm Z),
\]
using a flexible nonlinear regression method (e.g., random forests, gradient boosting, or neural networks).

The above construction intuitively makes each \(h_t\) an explicit \emph{supervised preprocessor}. The proxy \(\widebar{\bm X}_0^{(t)}\) retains, for each baseline item, the component that is predictable from assessments at time \(t\), since \(\widebar{\bm X}_0^{(t)}=\bm X_t \bm B_t\) is the linear reconstruction of baseline responses from follow-up data. The function \(h_t(\bm X_0,\bm Z)=\mathbb{E}(\widebar{\bm X}_0^{(t)}\mid \bm X_0,\bm Z)\) then learns to recover this temporally reproducible component from baseline measurements, allowing additional baseline covariates $\bm Z$ to inform preprocessing while keeping the downstream prognostic relationship exactly linear. Importantly, \(h_t\) targets a proxy defined in the original item space rather than an arbitrary latent embedding. Consequently, each coordinate of \(h_t(\bm X_0,\bm Z)\) corresponds to a stabilized version of the corresponding baseline item, preserving a direct item-wise interpretation. We make this intuition rigorous in Section \ref{sec:theory}.

\subsection{Obtaining the Final Linear Transformation}
REFINE finally combines the learned preprocessing map with a globally interpretable linear transformation to predict future outcomes via Equation \eqref{eq:decomp}. The resulting predictor targets the conditional mean of future questionnaire responses because we can write:
\[
h_t(\bm X_0,\bm Z)\beta_t
\;=\;
\mathbb{E}(\widebar{\bm X}_0^{(t)}\mid \bm X_0,\bm Z)\beta_t
\;=\;
\mathbb{E}(\widebar{\bm X}_0^{(t)}\beta_t\mid \bm X_0,\bm Z)
\;=\;
\mathbb{E}(\bm X_t\mid \bm X_0,\bm Z),
\]
where the third equality uses \(\widebar{\bm X}_0^{(t)}\beta_t=\bm X_t\). Hence the mapping \((\bm X_0,\bm Z) \mapsto h_t(\bm X_0,\bm Z)\beta_t\) recovers the Bayes-optimal predictor \(\mathbb{E}(\bm X_t\mid \bm X_0,\bm Z)\) in the population setting.

We also note that REFINE can naturally accommodate follow-up missingness under a missing-at-random (MAR) assumption \citep{Rubin76}, where missingness at time \(t\) depends on observed baseline variables \((\bm X_0,\bm Z)\). Let \(R_t\) denote the indicator that \(\bm X_t\) is observed. If \(R_t \ci \bm X_t \mid (\bm X_0,\bm Z)\), then the complete-case conditional mean equals the target conditional mean, \(\mathbb E(\bm X_t \mid \bm X_0,\bm Z,R_t=1)=\mathbb E(\bm X_t \mid \bm X_0,\bm Z)\). Models are estimated separately at each time point using subjects with \(R_t=1\), and both the proxy target \(\widebar{\bm X}_0^{(t)}=\bm X_t \bm B_t\) and decoder \(\bm\beta_t=\bm B_t^{-1}\) are learned from that same observed subset. Consequently, although the estimated reconstruction matrix \(\bm B_t\) may reflect the observed-case distribution, this dependence cancels algebraically in the final predictor. We specifically have
\[
h_t(\bm X_0,\bm Z)\beta_t = \mathbb E(\bm X_t \bm B_t \mid \bm X_0,\bm Z,R_t=1)\bm B_t^{-1}
\;=\;
\mathbb{E}(\bm X_t\mid \bm X_0,\bm Z, R_t=1).
\]
So under MAR conditional on $(\bm X_0, \bm{Z}),$
\[
\mathbb{E}(\bm X_t\mid \bm X_0,\bm Z,R_t=1) = \mathbb{E}(\bm X_t\mid \bm X_0,\bm Z).
\]
Thus, under MAR conditional on baseline predictors, positivity of follow-up observation, and invertibility of $\bm B_t$ in the observed-case population, REFINE remains consistent for the prognostic target $\mathbb E(\bm X_t \mid \bm X_0,\bm Z)$ without requiring imputation.

\subsection{Algorithm}

\begin{algorithm}[b]
\caption{REFINE: Redundancy-Exploiting Follow-up-Informed Nonlinear Enhancement}
\begin{algorithmic}[1]
\Statex \textbf{Input:} baseline matrix
\(\bm Y_0=(\bm X_0,\bm Z)\); follow-up data
\(\{(\bm X_t)\}_{t=1}^T\); nonlinear multivariate learner
\(\mathcal A\); new baseline data
\((\bm X_0^{\mathrm{test}},\bm Z^{\mathrm{test}})\)
\Statex \textbf{Output:} fitted model
\(\{(\widehat B_t,\widehat\beta_t,\widehat h_t)\}_{t=1}^T\);
predictions \(\widehat{\bm X}_t\)
\For{$t \in \{t_1,\dots,t_T\}$}
    \State 
    $\widehat B_t, \widebar{\bm{X}}^{(t)}_0 \leftarrow$ ordinary least squares of $\bm{X}_0$ on $\bm{X}_t$ \label{alg:REFINE:OLS}
    \State $\widehat\beta_t \leftarrow \widehat B_t^{-1}$ \label{alg:REFINE:inverse}
    \State $\widehat h_t \leftarrow$ fit non-linear predictor of $\widebar{\bm{X}}^{(t)}_0$ on $(\bm{X}_0, \bm{Z})$ \label{alg:REFINE:non-linear}
    \State \(\widehat{\bm H}_t \leftarrow \widehat h_t([\bm X_0^{\mathrm{test}},\bm Z^{\mathrm{test}}])\) \label{alg:REFINE:H}
    \State \(\widehat{\bm X}_t \leftarrow \widehat{\bm H}_t\widehat\beta_t\) \label{alg:REFINE:pred}
\EndFor
\label{alg:REFINE}
\end{algorithmic}
\end{algorithm}

We provide pseudocode for REFINE in Algorithm \ref{alg:REFINE}. REFINE takes as input baseline measurements consisting of all questionnaire items and nuisance covariates, follow-up item measurements across time points, and a user-specified nonlinear multivariate learner. For each follow-up time point, REFINE first fits ordinary least-squares (OLS) models (Line \ref{alg:REFINE:OLS}) to obtain the follow-up-informed proxy \(\widebar{\bm X}_0^{(t)}\) and the reconstruction matrix \(\widehat{B}_t\). It then computes the linear decoder \(\widehat{\beta}_t\) by inverting \(\widehat{B}_t\) (Line \ref{alg:REFINE:inverse}). Next, REFINE trains the nonlinear learner to predict \(\widebar{\bm X}_0^{(t)}\) from \((\bm X_0,\bm Z)\) (Line \ref{alg:REFINE:non-linear}), yielding the preprocessing map \(\widehat{h}_t\).

After this procedure, the fitted model \(\{(\widehat B_t,\widehat\beta_t,\widehat h_t)\}_{t=1}^T\) can be applied directly to new baseline data. Specifically, for test-set baseline inputs \((\bm X_0^{\mathrm{test}}, \bm Z^{\mathrm{test}})\), REFINE first computes the stabilized representation via \(\widehat h_t([\bm X_0^{\mathrm{test}}, \bm Z^{\mathrm{test}}])\) (Line \ref{alg:REFINE:H}) and then maps it to predicted follow-up items using the learned linear decoder, producing \(\widehat{\bm X}_t = \widehat h_t([\bm X_0^{\mathrm{test}}, \bm Z^{\mathrm{test}}])\,\widehat\beta_t\) (Line \ref{alg:REFINE:pred}). Thus, although REFINE leverages follow-up information during training to learn the preprocessing operator and decoder, inference on new data depends only on baseline inputs.

We provide a detailed time-complexity analysis of REFINE in Appendix~\ref{supp:time_complex}. The overall training complexity scales as
\[
O\!\left(T\big(n d^2 + d^3 + \mathcal{C}_{\mathrm{fit}}(n,p,d)\big)\right),
\]
where \(\mathcal{C}_{\mathrm{fit}}(n,p,d)\) denotes the training cost of the chosen multivariate nonlinear learner, and $T$ the total number of follow-up time points. In our implementation, we instantiate the nonlinear preprocessor with random forests. The random-forest component scales approximately as \(n\log n\) in sample size, while REFINE's linear calibration contributes the cubic dependence on \(d\). In practice, because questionnaire dimensions are typically modest, this overhead remains small. We will empirically see that REFINE is among the fastest methods we evaluated.

\section{Theoretical Results} \label{sec:theory}

This section formalizes the intuitive rationale behind REFINE by deriving guarantees aligned with clinical prediction settings. We establish four results. First, we characterize a natural class of pipelines that preprocess first and then predict linearly, and show that REFINE belongs to this class. Second, when we require the preprocessing step to preserve item-level meaning, the optimal pipeline becomes unique and coincides exactly with REFINE. Third, we show that our preferred decoder estimator based on matrix inversion is statistically well justified and exhibits favorable convergence properties. Finally, we prove that the end-to-end estimator converges at the rate determined by the slower of the nonlinear learner and the root-$n$ estimation of the linear decoder.

\subsection{Optimality of Preprocessing}
We formalize longitudinal redundancy by asking how well follow-up items linearly predict baseline items. For a fixed follow-up time $t$, define the population least-squares reconstruction matrix
\[
B_t = \arg\min_{B\in\mathbb R^{d\times d}} \ \mathbb E\!\left[\|\bm X_0-\bm X_t B\|_2^2\right].
\]
Then $\bm X_t B_t$ is the best linear proxy for $\bm X_0$ available from the follow-up questionnaire. REFINE uses this proxy to define a supervised stabilization target in the original item coordinates.

Importantly, $\widebar{\bm X}_0^{(t)}$ lives in the \emph{original item coordinates}.
Because the squared-error objective separates by columns, the $j$th column of $B_t$ solves
\[
B_{t,\cdot j} = \arg\min_{b\in\mathbb R^{d}} \; \mathbb E\!\left[(X_{0,j}-\bm X_t b)^2\right],
\]
so the $j$th coordinate of $\widebar{\bm X}_0^{(t)}=\bm X_tB_t$ is exactly the population least-squares predictor of the baseline item $X_{0,j}$ from $\bm X_t$.
Thus, each coordinate of the proxy has a direct interpretation as a stabilized version of the corresponding baseline item, while still allowing the reconstruction to borrow strength across items through multivariate regression.

REFINE uses this item-aligned proxy as a supervised stabilization target. However, even if the final predictor is Bayes-optimal, the internal split into a ``preprocessor'' and a ``linear decoder'' need not be unique. If we insist on a pipeline of the form
\[
(\bm X_0,\bm Z)\ \xrightarrow{\ g\ }\ \mathbb R^d\ \xrightarrow{\ \text{linear }\beta\ }\ \mathbb R^d,
\]
then we can change coordinates in the intermediate $d$-dimensional space without changing the final prediction. The next theorem characterizes this non-uniqueness. All proofs are provided in Appendix~\ref{sup:proofs}.

\begin{theorem}\label{thm:preproc}
Assume $B_t$ and $\mathrm{Cov}(m_t(\bm X_0,\bm Z))$ are invertible, where
$m_t(\bm X_0,\bm Z):=\mathbb E(\bm X_t\mid \bm X_0,\bm Z)$.
Let $g:\mathbb R^{d+q}\to\mathbb R^{d}$ denote the preprocessor and let
$\beta\in\mathbb R^{d\times d}$. Then
$g(\bm X_0,\bm Z)\beta = m_t(\bm X_0,\bm Z)$ almost surely (a.s.)
if and only if $B_t\beta$ is invertible and
\[
g(\bm X_0,\bm Z) = m_t(\bm X_0,\bm Z)\,(B_t\beta)^{-1}B_t \quad\text{a.s.}
\]
\end{theorem}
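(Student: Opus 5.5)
The plan is to prove the two directions separately. The key tool is the invertibility of $\mathrm{Cov}(m_t)$, which lets us cancel $m_t$ from the right of an almost-sure identity. Throughout we write $m_t=m_t(\bm X_0,\bm Z)$ and $g=g(\bm X_0,\bm Z)$, both row vectors in $\mathbb R^d$.

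\emph{Sufficiency} is a one-line check. Suppose $B_t\beta$ is invertible and $g=m_t(B_t\beta)^{-1}B_t$ a.s. Then
\[
g\beta=m_t(B_t\beta)^{-1}(B_t\beta)=m_t \quad\text{a.s.}
\]

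\emph{Necessity} is the substantive direction. Suppose $g\beta=m_t$ a.s.
\begin{enumerate}
\item \textbf{$\beta$ is invertible.} Suppose $\beta v=0$ for some $v\ne 0$. Then $m_t v=g\beta v=0$ a.s., so $v^\top\mathrm{Cov}(m_t)v=\mathrm{Var}(m_tv)=0$. This contradicts the invertibility of $\mathrm{Cov}(m_t)$. Hence $\beta$ is invertible, and since $B_t$ is invertible by assumption, so is $B_t\beta$.
\item \textbf{Solve for $g$.} Since $\beta$ is invertible, $g=m_t\beta^{-1}$ a.s. Moreover, $\beta^{-1}=\beta^{-1}B_t^{-1}B_t=(B_t\beta)^{-1}B_t$. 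Substituting gives $g=m_t(B_t\beta)^{-1}B_t$ a.s., which is the claimed form.
\end{enumerate}

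The only real obstacle is the first step of necessity, namely establishing that $\beta$ is invertible. This is exactly where the covariance hypothesis enters. A rank-deficient $\beta$ would force $m_t$ to lie a.s.\ in a proper subspace, which the nondegeneracy of $\mathrm{Cov}(m_t)$ rules out. Centering is not needed here: $m_tv=0$ a.s.\ already implies zero variance.

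The form $(B_t\beta)^{-1}B_t$ is kept rather than simplified to $\beta^{-1}$ because it anticipates the next result. Setting $\beta=\beta_t=B_t^{-1}$ gives $B_t\beta=I$, so the formula reduces to $g=m_tB_t=\mathbb E(\bm X_tB_t\mid\bm X_0,\bm Z)=h_t$, which is REFINE's preprocessor. It is also worth noting that the theorem's condition ``$B_t\beta$ invertible'' is equivalent to $\beta$ being invertible, since $B_t$ is invertible.
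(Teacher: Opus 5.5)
Your proposal is correct and follows the paper's proof essentially step for step. Both arguments use the invertibility of $\mathrm{Cov}(m_t)$ to force $\beta$ to be invertible, rewrite $\beta^{-1}=(B_t\beta)^{-1}B_t$, and treat the converse as immediate by multiplication. Your null-vector/variance argument simply makes explicit the paper's remark that $m_t$ cannot lie a.s.\ in a proper subspace.
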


\noindent\textbf{Remark.} The theorem says that Bayes-optimal factorizations are generally not unique: an invertible change of coordinates in the intermediate space produces a different $(g,\beta)$ with the same final predictor. This is an interpretability issue, because the coordinates of $g(\bm X_0,\bm Z)$ are what we want to call “preprocessed items.”

To fix the coordinate system, we impose a simple anchoring condition: decoding should do nothing beyond undoing the redundancy map induced by $B_t$ so that $B_t\beta = I_d$. Anchoring ensures that, if we had access to the oracle proxy $\widebar{\bm X}_0^{(t)}=\bm X_t B_t$, then decoding would return $\bm X_t$ exactly, i.e., $\widebar{\bm X}_0^{(t)}\beta=\bm X_t$. Under this anchoring, the Bayes-optimal factorization becomes unique:

\begin{corollary}\label{cor:unique}
Under Theorem~\ref{thm:preproc} and $B_t\beta = I_d$, the unique Bayes-optimal pair is $(g,\beta)=(h_t,B_t^{-1})$, where
$h_t(\bm X_0,\bm Z)=\mathbb E(\bm X_t B_t\mid \bm X_0,\bm Z)$.
\end{corollary}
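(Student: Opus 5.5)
The plan is to obtain the corollary by specializing Theorem~\ref{thm:preproc} to the anchoring constraint $B_t\beta = I_d$. There are two directions: existence (the pair $(h_t,B_t^{-1})$ is Bayes-optimal and anchored) and uniqueness (every anchored Bayes-optimal pair coincides with it a.s.). Throughout I keep the standing hypotheses of Theorem~\ref{thm:preproc}: $B_t$ is invertible and $\mathrm{Cov}(m_t(\bm X_0,\bm Z))$ is invertible, where $m_t(\bm X_0,\bm Z)=\mathbb E(\bm X_t\mid \bm X_0,\bm Z)$.

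\textbf{Uniqueness of $\beta$.} Since $B_t$ is invertible, the equation $B_t\beta=I_d$ has exactly one solution, $\beta=B_t^{-1}$. So the decoder is pinned down purely by the anchoring, before Bayes-optimality is used. In particular $B_t\beta=I_d$ is invertible, as the theorem requires.

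\textbf{Uniqueness of $g$.} Let $(g,\beta)$ be Bayes-optimal, meaning $g(\bm X_0,\bm Z)\beta=m_t(\bm X_0,\bm Z)$ a.s. The ``only if'' direction of Theorem~\ref{thm:preproc} then gives
\[
g(\bm X_0,\bm Z)=m_t(\bm X_0,\bm Z)\,(B_t\beta)^{-1}B_t=m_t(\bm X_0,\bm Z)\,B_t \quad\text{a.s.}
\]
Because $B_t$ is a deterministic matrix, linearity of conditional expectation gives
\[
m_t(\bm X_0,\bm Z)\,B_t=\mathbb E(\bm X_t\mid \bm X_0,\bm Z)\,B_t=\mathbb E(\bm X_t B_t\mid \bm X_0,\bm Z)=h_t(\bm X_0,\bm Z).
\]
Hence $g=h_t$ a.s.

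\textbf{Existence.} Conversely, take $g=h_t$ and $\beta=B_t^{-1}$. Anchoring holds, since $B_t\beta=I_d$. Bayes-optimality follows either from the ``if'' direction of the theorem, or directly from
\[
h_t B_t^{-1}=\mathbb E(\bm X_t B_t\mid \bm X_0,\bm Z)\,B_t^{-1}=m_t,
\]
which is the computation already given in the Algorithm section. Together with the uniqueness steps, this shows $(h_t,B_t^{-1})$ is the only anchored Bayes-optimal pair, up to a.s.\ equality of $g$.

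There is no real obstacle. The only care needed is to state uniqueness of $g$ as almost-sure equality, and to note that pulling the constant matrix $B_t$ inside the conditional expectation requires $\bm X_t$ to be integrable. The invertibility of $\mathrm{Cov}(m_t)$ is used only through Theorem~\ref{thm:preproc}.
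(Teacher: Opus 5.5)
Your proposal is correct and follows the paper's proof: specialize the representation from Theorem~\ref{thm:preproc} with $B_t\beta=I_d$ to get $\beta=B_t^{-1}$ and $g=m_tB_t=\mathbb E(\bm X_tB_t\mid \bm X_0,\bm Z)=h_t$. Your explicit existence check, via the converse direction of the theorem, is a small addition that the paper leaves implicit.
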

\noindent\textbf{Remark.}
When $B_t$ is invertible, $\beta_t=B_t^{-1}$ is a one-to-one linear map from the item-anchored proxy space to $\bm X_t$. Therefore the resulting preprocess--then--linearly--predict pipeline keeps the intermediate representation aligned with baseline item meaning while expressing prognosis through a single globally interpretable coefficient matrix.

\subsection{Justification of Inversion and Convergence Rate}

We next justify how REFINE estimates the linear decoder in finite samples. Fix $t$ and suppose the population factorization in \eqref{eq:decomp} holds with
$\beta_t = B_t^{-1}$.
Let $\widehat\beta_{t,\mathrm{inv}}:=\widehat B_t^{-1}$. As a natural alternative to inversion, one may instead estimate the decoder by a second-stage OLS regression of $\bm X_t$ on the learned features $\widehat H_t$ constructed from the same sample, yielding $\widehat\beta_{t,\mathrm{OLS}}$.
Write $\widehat h_t = h_t + r_{n,t}$ and set $a_{n,t}:=\|r_{n,t}\|_{2,n_t}$. Moreover let $U_i = X_{t,i} - h_i \beta_t$. The following statement holds:

\begin{lemma}\label{lem:invbeta}
Assume the following moment and regularity conditions for a fixed $t$: 
(i) $\mathbb{E}\|h_i\|_2^4<\infty$, $\mathbb{E}\|U_i\|_2^4<\infty$ and $\mathbb E(U_i\mid \bm X_{0,i},\bm Z_i)=0$; 
(ii) $\mathbb{E}(h_i^\top h_i)$ is positive definite; 
(iii) $a_{n,t} \in o_p(1)$; and
(iv) $\mathbb{E}\|\bm X_{0,i}\|_2^4<\infty$, $\mathbb{E}\|\bm X_{t,i}\|_2^4<\infty$, and $\mathbb{E}(\bm X_{t,i}^\top \bm X_{t,i})$ is positive definite.
Then:
\[
\|\widehat\beta_{t,\mathrm{OLS}}-\beta_t\|_F
\in
O_p\!\big(n_t^{-1/2} + a_{n,t}\big),
\qquad
\|\widehat\beta_{t,\mathrm{inv}}-\beta_t\|_F
\in
O_p(n_t^{-1/2}).
\]
\end{lemma}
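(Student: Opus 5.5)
The two claims will be handled separately, since they rest on different mechanisms. The second-stage OLS estimator involves the estimated features $\widehat H_t$. The inversion estimator is a smooth function of a plain OLS coefficient matrix and never touches $\widehat h_t$.

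\textbf{Inversion estimator.} By definition $\widehat B_t=(\bm X_t^\top\bm X_t)^{-1}\bm X_t^\top\bm X_0$, computed from the $n_t$ observed rows. Assumption (iv) supplies finite fourth moments and a positive definite $\mathbb E(\bm X_t^\top\bm X_t)$. The law of large numbers then gives $n_t^{-1}\bm X_t^\top\bm X_t\to\mathbb E(\bm X_t^\top\bm X_t)$, and the multivariate CLT gives $n_t^{-1}\bm X_t^\top(\bm X_0-\bm X_tB_t)=O_p(n_t^{-1/2})$. Finite variance of the products $X_{t,i}^\top(X_{0,i}-X_{t,i}B_t)$ follows from the fourth moments via Cauchy--Schwarz. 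Since $B_t$ is the population least-squares solution, the population normal equations hold, so this term has mean zero. Hence $\|\widehat B_t-B_t\|_F=O_p(n_t^{-1/2})$.

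To transfer the rate to the inverse, I use the identity
\[
\widehat B_t^{-1}-B_t^{-1}=-\widehat B_t^{-1}(\widehat B_t-B_t)B_t^{-1}.
\]
$B_t$ is invertible, as the population factorization $\beta_t=B_t^{-1}$ requires. Since $\widehat B_t\to_p B_t$, $\widehat B_t$ is invertible with probability tending to one and $\|\widehat B_t^{-1}\|$ is $O_p(1)$ by continuity of inversion. The stated $O_p(n_t^{-1/2})$ bound follows. No nonlinear learner is involved, which is why $a_{n,t}$ does not appear.

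\textbf{Second-stage OLS estimator.} Write $\widehat\beta_{t,\mathrm{OLS}}=(\widehat H_t^\top\widehat H_t)^{-1}\widehat H_t^\top\bm X_t$, with $\widehat H_t=H_t+R_t$, where $R_t$ stacks $r_{n,t}(\bm X_{0,i},\bm Z_i)$. Substitute $\bm X_t=H_t\beta_t+U$ and then $H_t=\widehat H_t-R_t$. This gives
\[
\widehat\beta_{t,\mathrm{OLS}}-\beta_t=(n_t^{-1}\widehat H_t^\top\widehat H_t)^{-1}\bigl[n_t^{-1}\widehat H_t^\top U-n_t^{-1}\widehat H_t^\top R_t\beta_t\bigr].
\]

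\emph{Gram matrix.} Expand $n_t^{-1}\widehat H_t^\top\widehat H_t=n_t^{-1}H_t^\top H_t+n_t^{-1}(H_t^\top R_t+R_t^\top H_t)+n_t^{-1}R_t^\top R_t$. The first term converges to the positive definite $\mathbb E(h^\top h)$ by (i)--(ii). The cross terms are bounded by Cauchy--Schwarz as $O_p(1)\cdot a_{n,t}$, and the last term is $a_{n,t}^2$. By (iii) all of these vanish, so the inverse Gram matrix is $O_p(1)$.

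\emph{Score term.} Split it as $n_t^{-1}H_t^\top U+n_t^{-1}R_t^\top U$. The first piece is $O_p(n_t^{-1/2})$ by the CLT, using $\mathbb E(U\mid\bm X_0,\bm Z)=0$ and the fourth moments. For the second piece, Cauchy--Schwarz gives $\|n_t^{-1}R_t^\top U\|\le a_{n,t}\,(n_t^{-1}\sum\|U_i\|^2)^{1/2}=O_p(a_{n,t})$.

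\emph{Bias term.} Again by Cauchy--Schwarz, $\|n_t^{-1}\widehat H_t^\top R_t\beta_t\|\le(n_t^{-1}\sum\|\widehat h_i\|^2)^{1/2}a_{n,t}\|\beta_t\|=O_p(a_{n,t})$.

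Combining the three pieces yields $O_p(n_t^{-1/2}+a_{n,t})$.

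\textbf{Main obstacle.} The delicate step is the term $n_t^{-1}R_t^\top U$. Because $\widehat h_t$ is fit on the same sample, $r_{n,t}$ depends on $U$, so no conditional-mean-zero argument is available. That is why I deliberately settle for the crude Cauchy--Schwarz bound of order $a_{n,t}$, which is all the statement claims; I would not attempt to obtain a faster rate without sample splitting. A secondary point is ensuring that $\|\cdot\|_{2,n_t}$ is interpreted as the empirical root-mean-square norm over the $n_t$ rows, so that the Cauchy--Schwarz bounds read exactly as above.
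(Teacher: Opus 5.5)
Your proposal is correct and follows essentially the same route as the paper: the same score/bias decomposition of $\widehat\beta_{t,\mathrm{OLS}}-\beta_t$, with a CLT for $n_t^{-1}H_t^\top U_t$ and Cauchy--Schwarz bounds of order $a_{n,t}$ for the remaining terms, and the same root-$n_t$ OLS rate for $\widehat B_t$ transferred to the inverse via $\widehat B_t^{-1}-B_t^{-1}=-\widehat B_t^{-1}(\widehat B_t-B_t)B_t^{-1}$. The differences are only cosmetic: you bound $n_t^{-1}\widehat H_t^\top R_t\beta_t$ in one piece rather than splitting it into $H_t^\top R_t$ and $R_t^\top R_t$ terms, and you spell out the normal-equation/CLT argument for $\widehat B_t$, which the paper states in one line.
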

\noindent \textbf{Remark.} The key message is practical: if $h_t$ is learned with a flexible nonlinear method, its estimation error $a_{n,t}$ shrinks at a nonparametric rate (often slower than $n_t^{-1/2}$). If we fit the decoder by same-sample OLS on $\widehat H_t$, that nonparametric error propagates into $\widehat\beta_{t,\mathrm{OLS}}$. In contrast, estimating $\beta_t$ by inverting $\widehat B_t$ avoids this additional dependence on $a_{n,t}$ and achieves a parametric, root-$n_t$ rate. This motivates our choice to calibrate the linear decoder via inversion rather than refitting it on learned features. Moreover, we will see that these rates matter in practice in Section \ref{sec:emp}. 

With the above lemma in hand, we can bound the end-to-end error of REFINE.

\begin{theorem}\label{thm:convergence_rate}
Consider the same assumptions as Lemma~\ref{lem:invbeta} for a fixed $t$, and let
$\widehat\beta_t=\widehat\beta_{t,\mathrm{inv}}=\widehat B_t^{-1}$. Then
$\|\widehat H_t\,\widehat\beta_t - H_t\beta_t\|_{2,n_t}
\in
O_p\!\big(n_t^{-1/2}+a_{n,t}\big).$
\end{theorem}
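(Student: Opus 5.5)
The plan is to add and subtract $\widehat H_t\beta_t$ and then apply the triangle inequality for the empirical norm $\|\cdot\|_{2,n_t}$:
\[
\|\widehat H_t\widehat\beta_t - H_t\beta_t\|_{2,n_t}
\le \|\widehat H_t(\widehat\beta_t-\beta_t)\|_{2,n_t} + \|(\widehat H_t - H_t)\beta_t\|_{2,n_t}.
\]
Here $H_t$ denotes the $n_t\times d$ matrix with rows $h_t(\bm X_{0,i},\bm Z_i)$, and $\|M\|_{2,n_t}^2 = n_t^{-1}\sum_i\|M_{i\cdot}\|_2^2 = n_t^{-1}\|M\|_F^2$. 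The two terms separate the error of the nonlinear preprocessor from the error of the linear decoder, and each is bounded on its own.

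\textbf{Second term.} Since $\widehat H_t - H_t$ has rows $r_{n,t}(\bm X_{0,i},\bm Z_i)$, submultiplicativity of the operator norm row by row gives
\[
\|(\widehat H_t-H_t)\beta_t\|_{2,n_t}\le \|\beta_t\|_{\mathrm{op}}\,\|r_{n,t}\|_{2,n_t} = \|\beta_t\|_{\mathrm{op}}\, a_{n,t}.
\]
The factor $\|\beta_t\|_{\mathrm{op}}$ is a finite constant because $\beta_t = B_t^{-1}$ and $B_t$ is invertible. Invertibility of $B_t$ is part of the standing factorization assumption, and it is also implied through (iv) together with the population identity $\bm X_t = \widebar{\bm X}_0^{(t)}\beta_t$. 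So this term is $O_p(a_{n,t})$.

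\textbf{First term.} Use $\|\widehat H_t(\widehat\beta_t-\beta_t)\|_{2,n_t}\le \|\widehat H_t\|_{2,n_t}\,\|\widehat\beta_t-\beta_t\|_{\mathrm{op}}$, and bound the operator norm by the Frobenius norm. Lemma~\ref{lem:invbeta} then gives $\|\widehat\beta_{t,\mathrm{inv}}-\beta_t\|_F = O_p(n_t^{-1/2})$. It remains to show $\|\widehat H_t\|_{2,n_t}=O_p(1)$. By the triangle inequality,
\[
\|\widehat H_t\|_{2,n_t}\le \|H_t\|_{2,n_t}+a_{n,t}.
\]
Here $\|H_t\|_{2,n_t}^2 = n_t^{-1}\sum_i\|h_i\|_2^2$ converges in probability to $\mathbb E\|h_i\|_2^2<\infty$ by the weak law of large numbers; the fourth-moment condition (i) implies the second moment is finite. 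Also $a_{n,t}=o_p(1)$ by (iii). Hence $\|\widehat H_t\|_{2,n_t}=O_p(1)$, and the first term is $O_p(1)\cdot O_p(n_t^{-1/2}) = O_p(n_t^{-1/2})$.

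Adding the two bounds gives $O_p(n_t^{-1/2}+a_{n,t})$, as claimed. The only step with real content is controlling $\widehat\beta_t-\beta_t$ at the parametric rate, and Lemma~\ref{lem:invbeta} supplies exactly that: its inversion estimator's error does not depend on $a_{n,t}$, so no cross term of order $a_{n,t}$ times anything slower appears. The point needing care is that $\widehat H_t$ and $\widehat\beta_t$ may be computed on the same sample. The argument above never needs independence between them, since it only multiplies two stochastically bounded quantities, so sample splitting is unnecessary. I expect the remaining obstacle to be notational bookkeeping: checking that $a_{n,t}$ is measured in the same empirical norm over the $n_t$ evaluation points at which $\widehat H_t$ is formed, which is how the paper defines it.
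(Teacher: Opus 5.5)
Your proof is correct and follows the paper's argument. The paper writes $\widehat H_t\widehat\beta_t - H_t\beta_t = R_t\beta_t + H_t\Delta_\beta + R_t\Delta_\beta$ and bounds the three terms separately; you merge the last two into $\widehat H_t\Delta_\beta$ and use $\|\widehat H_t\|_{2,n_t}\le\|H_t\|_{2,n_t}+a_{n,t}$, which gives the same bound from the same ingredients: $\|\beta_t\|_{\mathrm{op}}<\infty$, $\|H_t\|_{2,n_t}\in O_p(1)$, and the root-$n_t$ rate for the inversion decoder from Lemma~\ref{lem:invbeta}.
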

\noindent \textbf{Remark.} This theorem shows that REFINE behaves like a standard two-stage procedure in which the overall rate is controlled by (i) a parametric term from estimating the linear reconstruction and (ii) a nonparametric term from learning the stabilizer. In particular, REFINE does not pay an extra statistical price for keeping the prognostic map exactly linear: when the nonlinear learner converges more slowly than $n_t^{-1/2}$, its rate remains the dominant term.

\section{Empirical Results} \label{sec:emp}
\subsection{Algorithms}
We compared REFINE, using 1000 random forest trees in the nonlinear preprocessing step, against the following four algorithms:
\begin{enumerate}[leftmargin=*]
\item \textbf{Attribution-based Interpretable Classification Neural Network} (AICNN) \citep{Shi25}: the most similar method to REFINE that learns nonlinear embeddings, reconstructs them into a one-hot-aligned interpretable feature-value space, and then applies a linear head for interpretability. The model is trained with a joint reconstruction and prediction loss trade-off. We set the number of epochs to 50, the batch size to 128, $\lambda$ to the author recommended value of 0.1, and otherwise used default hyperparameters.

\item \textbf{Gaussian Process Boosting} (GPBoost) \citep{Sigrist22}: a joint learning algorithm for non-linear mixed-effects models that fits the fixed-effects mean function with stage-wise boosting while iteratively estimating grouped random-effect covariance parameters.  We learned variable importance with TreeSHAP \citep{Lundberg20}. We set the number of rounds to 2000, the number of early stopping rounds to 50, and otherwise used default hyperparameters. This serves as a mixed-model baseline improved over standard XGBoost.

\item \textbf{Mixed Generalized Additive Model Computation Vehicle} (MGCV) \citep{Wood25}: fits generalized additive and mixed models by representing smooth terms with penalized basis expansions and then alternating/embedding penalized iterative reweighted least squares coefficient updates with automatic smoothing-parameter estimation. We used default hyperparameters. This serves as our additive model baseline.

\item \textbf{Extreme Gradient Boosting} (XGBoost) \citep{Chen16}: a popular boosting algorithm which builds an ensemble of decision trees stage-wise by fitting each new tree to the gradients of the loss to improve prediction. We also learned variable importance with TreeSHAP. We set the number of rounds to 500, the number of early stopping rounds to 30, and the learning rate to 0.05 to maximize performance. This serves as our standard SHAP-based local-attribution baseline.

\end{enumerate}
AICNN, like REFINE, preserves item-level meaning, but it does not leverage follow-up measurements to learn a longitudinally optimized nonlinear representation. MGCV improves interpretability by constraining the predictor to an additive functional form, but this limits its ability to capture more general nonlinear dependencies. GPBoost and XGBoost rely on TreeSHAP to summarize variable importance. SHAP provides instance-specific feature attributions, but in settings with multiple outcomes these explanations can quickly become overwhelming because they vary across patients for each outcome. As a result, investigators typically approximate global interpretability using heuristic aggregates such as mean absolute attribution \citep{Lundberg20}. REFINE instead concentrates nonlinearity in a supervised preprocessing step and then fits a single global linear model, so the primary interpretive object is a unified coefficient matrix rather than a heuristic aggregate of local explanations. Taken together, these distinctions make REFINE the only method in our comparison that jointly enforces longitudinal redundancy and item-level meaning within a single globally interpretable model.

We also compared REFINE against \textbf{two ablated variants}. In the first, we estimated each \(\beta_t\) by linear regression rather than by inverting \(B_t\), to evaluate the practical implications of Lemma \ref{lem:invbeta}. In the second, we replaced the nonlinear preprocessor \(h_t\) with ordinary least squares, to test whether the nonlinear preprocessing step is necessary for REFINE to achieve its full predictive performance. This second ablation also reduces to a simple linear baseline, because composing two linear mappings (baseline \(\to\) proxy and proxy \(\to\) follow-up) yields a single linear map from baseline measurements to follow-up items. 

Finally, because the original implementation of REFINE used random forests for nonlinear preprocessing, we examined whether performance depended strongly on the \textbf{choice of the nonlinear learner}. We replaced the random forest preprocessor with either bagged $k$-nearest neighbors (100 bags, $k=10$) or a bagged Nadaraya--Watson estimator (100 bags, Gaussian kernel, $k=50$). We chose these alternatives because they estimate the preprocessing map by local averaging of observed training targets rather than by extrapolating into unsupported regions of the feature space. This non-extrapolating behavior is particularly useful in REFINE, because errors in the learned preprocessing map are subsequently multiplied by the inverse decoder $\widehat B_t^{-1}$; extreme preprocessing values produced out of support could therefore be amplified in the final prediction.

\subsection{Metrics}

We evaluated all algorithms using five complementary metrics. First, we measured \textbf{(forward) correlation} between predicted and observed item scores at each follow-up time point, which provides a standardized, item-wise measure of regression accuracy. Second, because our primary contribution is to improve interpretability via nonlinear preprocessing, we quantified \textbf{backward correlation}: for each method, we attempted to reconstruct the baseline-derived quantities---item-aligned representations (REFINE, AICNN), additive components (MGCV), or SHAP attributions (GPBoost, XGBoost)---from the non-baseline item vectors using ridge regression. High backward correlation indicates that these baseline-derived representations preserve stable, meaningful signals across timepoints. Methods that effectively exploit longitudinal redundancy should therefore achieve strong performance in both directions, yielding high forward and backward correlation. We additionally report forward and backward \textbf{root mean squared error} (RMSE), pooled across items, as a conventional measure of overall regression accuracy.

Fourth, we quantified preservation of item-level meaning using the \textbf{cosine similarity} between each method's baseline-to-follow-up contribution matrix and its diagonal component. A higher value indicates that contributions are concentrated near the diagonal---i.e., for follow-up assessments close to baseline, item \(j\) is driven primarily by its corresponding baseline item \(j\) rather than by a mixture of other items. We do not expect perfect alignment (cosine similarity \(=1\)) in practice, because true symptom trajectories can include cross-item interactions and correlated change. However, higher cosine similarity is informative at \emph{near-baseline follow-up time points}, because cross-item dynamics and accumulated change naturally reduce diagonal dominance at longer horizons. 

Fifth, because REFINE constructs the linear decoder using the unregularized inverse $\widehat{\bm\beta}_t=\widehat{\bm B}_t^{-1}$, we quantified the \textbf{realized amplification} induced by the inverse decoder at each follow-up time:
\[
A_t^{\mathrm{real}}
=
\frac{
\left\|
\widehat{\bm H}_{t,c}\widehat{\bm\beta}_t
\right\|_F
}{
\left\|
\widehat{\bm H}_{t,c}
\right\|_F
}
=
\frac{
\left\|
\widehat{\bm H}_{t,c}\widehat{\bm B}_t^{-1}
\right\|_F
}{
\left\|
\widehat{\bm H}_{t,c}
\right\|_F
},
\]
where $\widehat{\bm H}_{t,c}$ denotes the column-centered version of
$\widehat{\bm H}_t$. This metric quantifies how strongly the
inverse decoder amplifies the variation present in the stabilized item values
learned by REFINE. Unlike $1/\sigma_{\min}(\widehat{\bm B}_t)$, which gives worst-case amplification over all directions, or the condition number, which characterizes relative conditioning, realized amplification measures the amplification that the fitted inverse decoder actually induces on the variation present in REFINE's learned stabilized representation $\widehat{\bm H}_t$. Values near one indicate no net rescaling by the inverse decoder. We regarded values above 10 as concerning in the present setting because the stabilized baseline items and corresponding follow-up items retain the same original measurement scales; an order-of-magnitude increase in variation would therefore be difficult to explain as ordinary scale restoration alone and could indicate substantial sensitivity to the unregularized inverse.

We evaluated all algorithms on each dataset using \textbf{100 bootstrap resamples}. For each resample, models were trained on the bootstrap-selected observations and evaluated on the corresponding out-of-bootstrap held-out observations. Statistical comparisons were performed using \textbf{paired, within-bootstrap hypothesis tests}. For the forward--backward Pareto analysis, we compared REFINE with the empirical comparator frontier in each bootstrap sample: we computed REFINE's gain over the best forward-performing comparator and its gain over the best backward-performing comparator, defined the frontier-crossing margin as the larger of these two gains, and tested whether the mean margin exceeded zero using a one-sided paired bootstrap \(t\)-test. For item alignment, we compared REFINE's first-follow-up cosine similarity with the highest first-follow-up cosine similarity achieved by any comparator in the same bootstrap sample, again using a one-sided paired bootstrap \(t\)-test. Significance was assessed after Bonferroni correction, and all claims below held at the corrected level \(0.05/6\), reflecting comparisons with four primary algorithms and two ablated variants.

\subsection{Clinical High Risk for Psychosis}
We first evaluated all methods on the NAPLS-3 dataset (NDA ID: 2275) \citep{Addington22}. The prediction task involves estimating the longitudinal progression of prodromal psychosis symptoms, measured by the Scale of Prodromal Symptoms (SOPS), at months 2, 4, 6, 8, 12, 18, and 24 from baseline SOPS items together with age and sex assigned at birth. SOPS is a 19-item clinical rating scale that quantifies early psychosis-spectrum symptomatology and is widely used to identify individuals at clinical high risk (CHR) for subsequent psychotic disorders \citep{Mcglashan01}. The dataset includes 309 individuals with complete baseline data, including age, sex assigned at birth, and all SOPS item scores.

\begin{figure}
    \centering
    \includegraphics[width=0.9\linewidth]{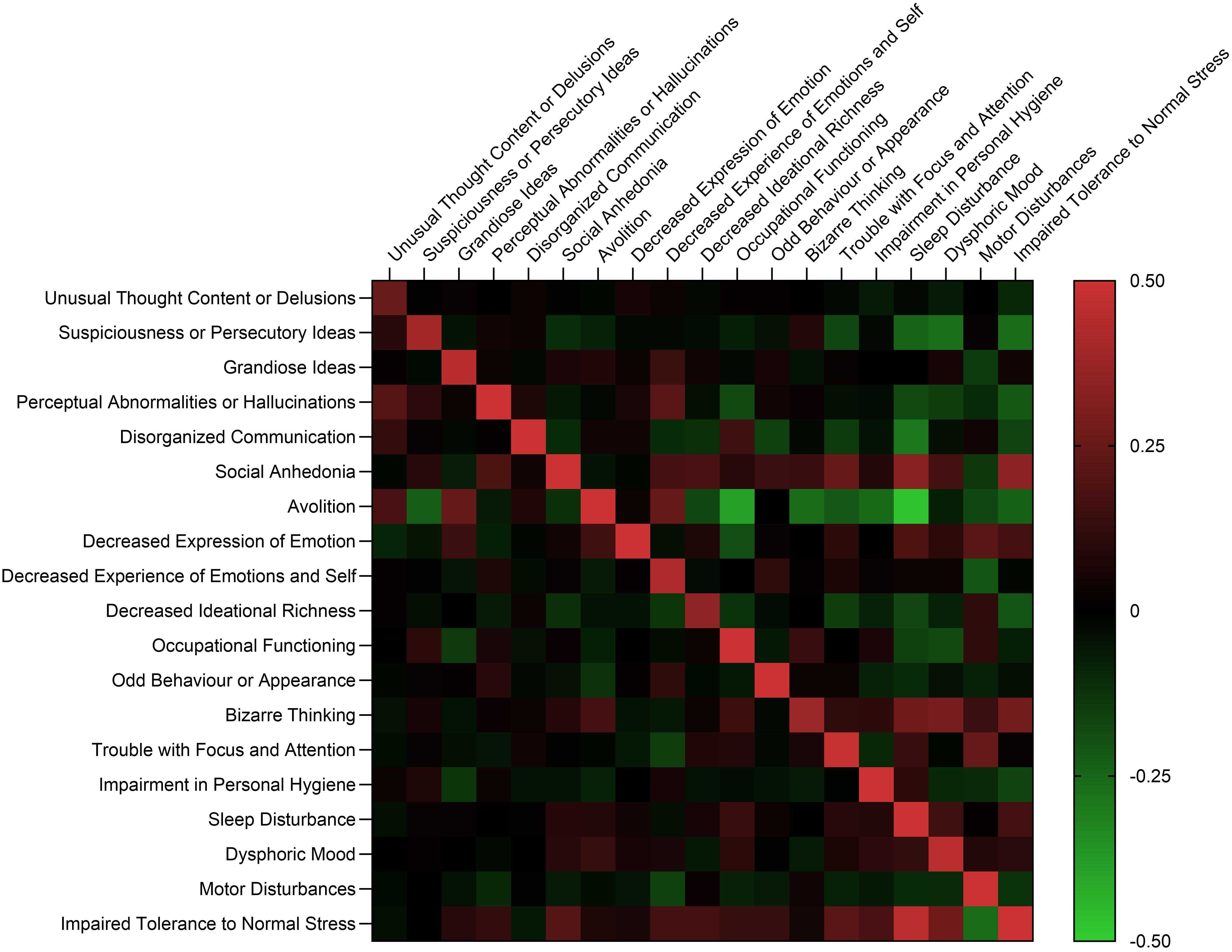}
    \caption{\textbf{Example heatmap from NAPLS-3.}
Standardized REFINE decoder coefficients for month 2 SOPS prediction. Rows denote stabilized baseline item coordinates and columns denote month 2 follow-up items. The strong diagonal structure reflects preservation of item-level correspondence, while off-diagonal entries highlight cross-symptom prognostic relationships.}
    \label{fig:NAPLS3_heatmap}
\end{figure}

For qualitative evaluation, a practicing psychiatrist reviewed the example coefficient heatmap in Figure~\ref{fig:NAPLS3_heatmap} for clinical plausibility and interpreted its dominant diagonal and off-diagonal patterns. The heatmap visualizes the REFINE linear decoder used to predict month 2 SOPS items from stabilized baseline SOPS item coordinates. Rows correspond to stabilized baseline items, columns correspond to month 2 follow-up items, and entries are standardized coefficients, so each value represents the expected change in the follow-up item, in standard deviation units, associated with a one-standard-deviation increase in the corresponding stabilized baseline item coordinate, conditional on the full stabilized baseline symptom profile. Consistent with REFINE's item-aligned construction, the largest positive weights tend to occur on the diagonal, indicating that REFINE preserves substantial item-level meaning while still allowing clinically informative cross-item associations.

Psychiatrically, the SOPS heatmap suggested several clinically meaningful profile contrasts in the psychosis prodrome. Baseline social anhedonia and impaired tolerance to normal stress predicted a broad array of later negative, disorganized, and general symptoms, suggesting that these baseline features may mark a broader non-positive-symptom vulnerability profile. This interpretation accords with prior work linking social anhedonia and impaired stress tolerance to broader clinical and functional burden in individuals at clinical high risk for psychosis \cite{Devylder13,Cressman15}. In contrast, avolition showed the strongest conditional contrast; although baseline avolition strongly predicted later avolition, it had negative coefficients for several later disorganized and general symptoms. This pattern suggests that REFINE separated a relatively specific amotivational presentation from a broader stress-sensitive or disorganized presentation, in line with prior work distinguishing volitional negative symptoms from other negative-symptom dimensions \cite{Azis19}. Together, these patterns illustrate how clinicians and researchers can read REFINE coefficients as a compact map of conditional symptom-profile trajectories, identifying baseline presentations that forecast symptom persistence, broad downstream vulnerability, or clinically meaningful contrast patterns.

\begin{figure}
    \centering
    \includegraphics[width=1\linewidth]{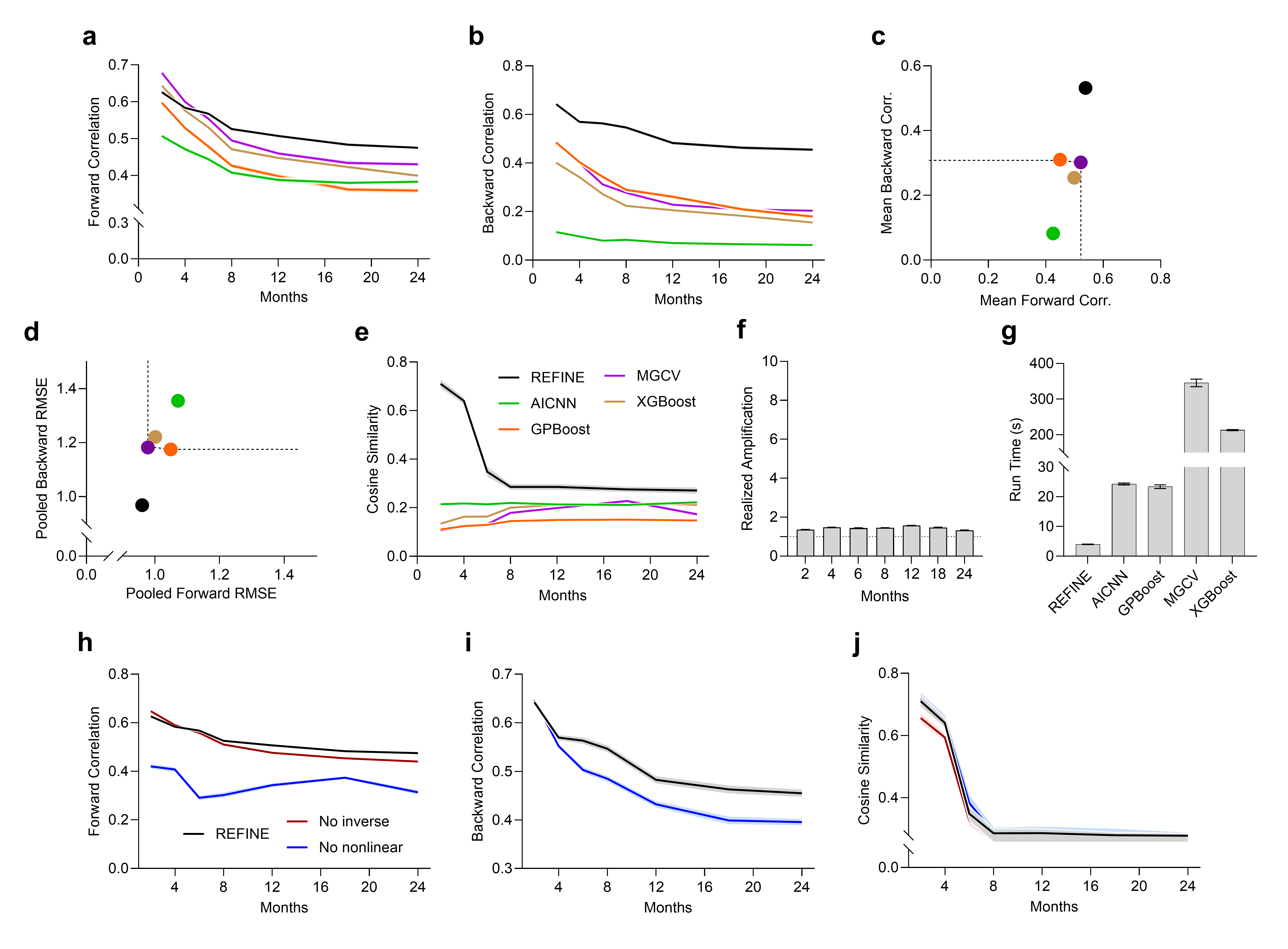}
\caption{\textbf{NAPLS-3 results across follow-up months.}
(a) Forward correlation between the predicted and observed SOPS items.
(b) Backward correlation, which quantifies how accurately follow-up items reconstruct baseline-derived representations.
(c) Pareto frontier summarizing the trade-off between forward and backward correlation.
(d) Pareto frontier with forward and backward pooled RMSE.
(e) Cosine similarity between each method’s contribution matrix and its diagonal component, indicating item alignment.
(f) Realized amplification induced by REFINE's inverse decoder.
(g) Runtime for each method.
(h--j) Ablation results for forward correlation, backward correlation, and cosine similarity, respectively. The black line overlaps the red line in (i), rendering the red line not visible. Panels (a--e) use a common color mapping for the primary methods, and panels
(h--j) use a separate common color mapping for the ablation variants. Error bands/bars represent 95\% confidence intervals, but some are barely visible due to the large number of bootstrap replications. }
    \label{fig:NAPLS3}
\end{figure}

Figure~\ref{fig:NAPLS3} summarizes the longitudinal prediction results. REFINE achieved the highest forward correlation at most follow-up time points (Figure~\ref{fig:NAPLS3}a) and the highest backward correlation at every time point (Figure~\ref{fig:NAPLS3}b), indicating that it preserves more of the shared longitudinal structure than the competing methods. Consistent with this pattern, REFINE occupied the most favorable region of the forward--backward correlation trade-off in the Pareto frontier plot (Figure~\ref{fig:NAPLS3}c). Similar results held with the forward and backward RMSE (Figure~\ref{fig:NAPLS3}d). Replacing the random forest preprocessor with bagged $k$-nearest neighbors or a bagged Nadaraya--Watson estimator did not materially change the results (Figure~\ref{fig:nonlin_swap}a).

REFINE also yielded the highest cosine similarity between each method’s baseline-to-follow-up contribution matrix and its diagonal component, particularly at month 2 (Figure~\ref{fig:NAPLS3}e). This result suggests that REFINE concentrates predictive contributions on item-aligned effects---that is, baseline item $j$ most strongly influences follow-up item $j$---rather than relying on diffuse cross-item mixing. As expected, cosine similarity decreased at longer prediction horizons, because the same SOPS items measured at more distant time points do not reflect exactly the same underlying symptom state as they do near baseline. We therefore conclude that, among the methods evaluated, REFINE most effectively preserves item-level meaning. 

Next, the inverse decoder produced only modest realized amplification, with values between 1 and 2 (Figure~\ref{fig:NAPLS3}f), suggesting that it did not substantially magnify variation in REFINE's learned stabilized representation or exhibit the extreme sensitivity expected from an unstable inversion. Finally, REFINE is computationally efficient: it is the fastest method in this comparison and completed consistently in under five seconds (Figure~\ref{fig:NAPLS3}g).

The ablation results further clarified which components drive performance. REFINE achieved the highest forward correlation across ablations, including the variant that replaces matrix inversion with a linear model trained on $\widehat{h}_t$ (Figure~\ref{fig:NAPLS3}h). REFINE also substantially outperforms OLS in backward correlation (Figure~\ref{fig:NAPLS3}i). For cosine similarity, REFINE performs consistently better when inverting $B_t$ than when training a linear model (Figure~\ref{fig:NAPLS3}j). Taken together, these results indicate that both the inversion step and the nonlinear component of REFINE are important for optimal performance.

\subsection{Treated Major Depression}

We next evaluated the algorithms on STAR*D, a large multistep clinical effectiveness trial in major depression that examined sequential treatment strategies (NDA ID: 2148) \citep{Rush04}. In Level 1 of STAR*D, treatment resistance was established by first administering citalopram to all participants. Participants who did not sufficiently improve in Level 1 could proceed to subsequent treatment steps, which included multiple treatment options and randomized comparisons in parts of the design. We therefore used Level 1 data to assess how well baseline features predict symptom trajectories during citalopram treatment. The dataset included 3,697 patients with complete baseline Quick Inventory of Depressive Symptomatology–Self-Report (QIDS-SR) data. Participants were followed for up to 16 weeks. Similar to the prior analysis, we used all 16 questionnaire items and all available time points, while conditioning on age and sex assigned at birth in $\bm{Z}$.

A practicing psychiatrist interpreted an example heatmap of standardized REFINE coefficients for predicting week 4 QIDS-SR items from baseline (Figure~\ref{fig:STARD_heatmap}). For this analysis, we collapsed the appetite-decrease and appetite-increase items into a single directional appetite-change score, and similarly collapsed the weight-decrease and weight-increase items into a single directional weight-change score, with decreases coded lower and increases coded higher. Psychiatrically, the heatmap suggested several clinically meaningful profile contrasts. Baseline concentration/decision-making had negative coefficients for later low energy/fatigue, psychomotor slowing, and loss of interest. One plausible interpretation is that REFINE separated a cognitive, ruminative, or decisional depressive presentation from a more anergic-neurovegetative presentation, rather than reducing concentration problems to a nonspecific marker of overall depressive severity. Prior work distinguishing cognitive from somatic depressive symptom dimensions supports this interpretation \cite{Duivis13}. Early-morning insomnia also contrasted negatively with middle insomnia. Because sleep-onset, sleep-maintenance, and early-morning-awakening insomnia represent separable sleep-symptom subtypes, this pattern suggests that REFINE captured clinically meaningful heterogeneity within sleep-related depressive symptoms \cite{Bjoroy20}. Finally, loss of interest did not simply track low energy/fatigue: the two symptoms showed separable coefficient patterns, aligning with prior evidence that anhedonia and fatigue overlap but remain partly distinguishable constructs \cite{Billones20}.

\begin{figure}
    \centering
    \includegraphics[width=0.8\linewidth]{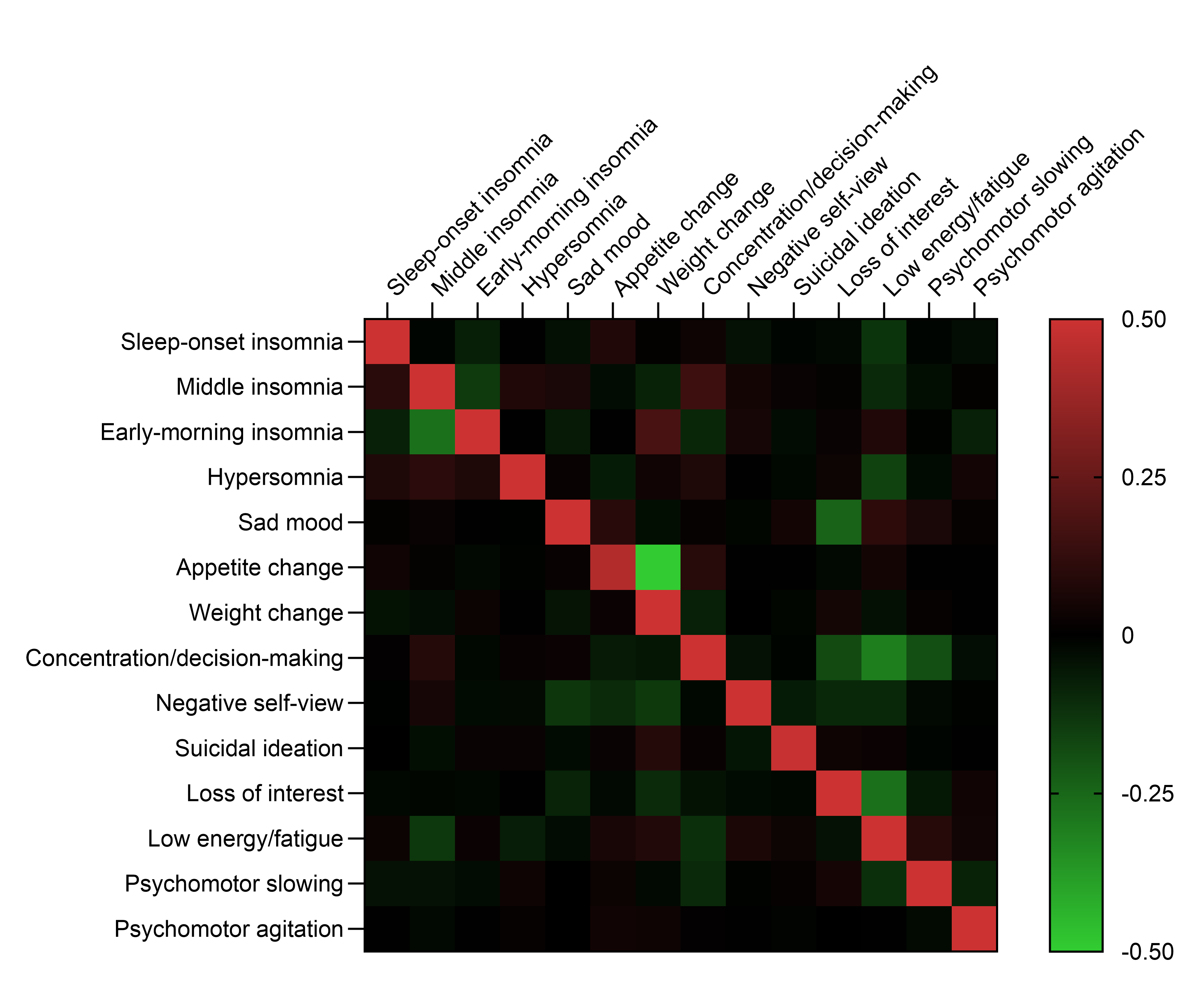}
    \caption{\textbf{Example heatmap from STAR*D.}
Standardized REFINE decoder coefficients for week 4 for QIDS-SR 16 prediction. We collapsed the appetite-decrease and appetite-increase items into a single directional appetite-change score, and similarly collapsed the weight-decrease and weight-increase items into a single directional weight-change score.}
    \label{fig:STARD_heatmap}
\end{figure}

\begin{figure}
    \centering
    \includegraphics[width=1\linewidth]{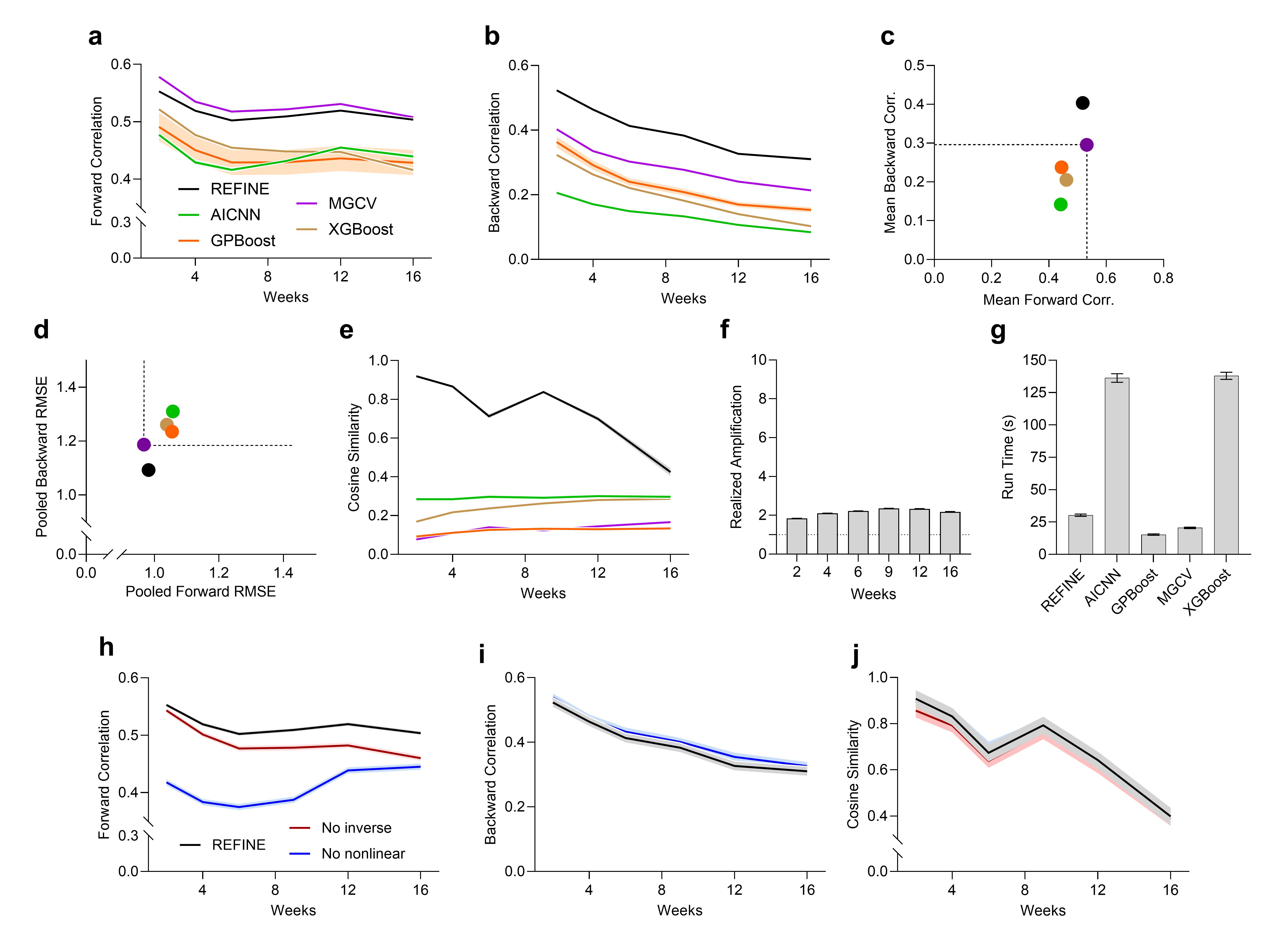}
    \caption{\textbf{STAR*D results across follow-up weeks.}
(a) Forward correlation,
(b) backward correlation,
(c) correlation Pareto plot,
(d) RMSE Pareto plot,
(e) cosine similarity,
(f) realized amplification,
(g) runtime per method,
(h) forward correlation of ablations,
(i) backward correlation of ablations,
(j) cosine similarity of ablations.}
    \label{fig:STARD}
\end{figure}

Figure \ref{fig:STARD} summarizes the quantitative results. REFINE outperformed all methods except MGCV in forward correlation (Figure \ref{fig:STARD}a). GPBoost exhibited high variability, reflecting both explicit convergence failures and substantial variability in fitted solutions even among runs that converged. In backward correlation, REFINE outperformed all comparator algorithms by a wide margin (Figure \ref{fig:STARD}b). The Pareto frontier also showed clear improvement for REFINE relative to competing methods (correlation in Figure \ref{fig:STARD}c, RMSE in Figure \ref{fig:STARD}d). Changing the nonlinear learner did not substantively alter the results (Figure~\ref{fig:nonlin_swap}b). In addition, REFINE achieved substantially higher cosine similarity than the other algorithms (exceeding 0.8), particularly at the earliest follow-up visits (weeks 2 and 4; Figure \ref{fig:STARD}e). These results indicate that REFINE again best preserved both longitudinal redundancy and item-level meaning. Moreover, the inverse decoder produced only modest realized amplification, with values near 2 across all follow-up time points (Figure~\ref{fig:STARD}f), while REFINE completed in approximately 30 seconds on average (Figure~\ref{fig:STARD}g).
Ablation results further identified the components necessary for REFINE’s performance (Figure~\ref{fig:STARD}h--j): both nonlinearity and inversion maximized forward correlation, and the inversion step also improved preservation of item-level meaning with the highest cosine similarity. We conclude that all of the results on STAR*D closely mimic the results seen in the NAPLS-3 dataset.

\subsection{Longitudinal Adolescent Health}

We finally evaluated the algorithms on an additional non-psychiatric dataset comprising 3,114 adolescents, focusing on the longitudinal prediction of anthropometric measures, blood pressure, and pubertal development (Appendix~\ref{supp:ABCD}). REFINE again delivered the best overall balance of predictive accuracy, preservation of item-level meaning, and runtime across comparator algorithms and ablated variants, indicating that its advantages extend beyond psychiatric questionnaires.

\section{Discussion}

In many psychiatric settings, total scores offer a useful summary but can obscure clinically meaningful heterogeneity in how individual symptoms change over time. To address this limitation, we developed REFINE, an algorithm for interpretable forecasting of the full symptom profile rather than a single aggregate endpoint. REFINE improves interpretability by unifying two components that are often treated separately in predictive modeling: preprocessing and model training. It places nonlinear flexibility in a learned preprocessing map that generates stabilized, item-aligned scores, and then applies an exactly linear decoder for prediction, yielding a globally interpretable prognostic model. This design mirrors a common scientific workflow---use flexible preprocessing to reduce measurement artifacts, then fit a simple model for transparent attribution. Formally, REFINE factorizes the conditional mean $\mathbb{E}(\bm X_t \mid \bm X_0, \bm Z)$ into a nonlinear preprocessor $h_t$ and a linear decoder $\beta_t$, where $h_t$ is learned from longitudinal redundancy rather than hand-crafted, instrument-specific rules.

Because psychiatric questionnaires lack an obvious, defensible preprocessing operator, we designed the preprocessor around two minimal criteria for clinically interpretable preprocessing: it must (i) exploit longitudinal redundancy by emphasizing temporally reproducible signal and (ii) preserve item-wise meaning by remaining aligned with the original questionnaire coordinates. Under these constraints, our theoretical results show that REFINE attains Bayes-optimal prediction and is uniquely identified within the class of preprocess--then--linearly--predict pipelines. We also showed that the end-to-end convergence rate is governed by the slower of the nonlinear estimator used for $h_t$ and the usual root-$n$ rate of the linear decoder. In our experiments, REFINE was the only method we evaluated that simultaneously matched or exceeded the predictive accuracy of other interpretable baselines while preserving item-level meaning. Taken together, the effort to formalize preprocessing for psychiatric questionnaires led us to an unexpected, generalizable design principle: learn task-aligned stabilization as an explicit preprocessing step, and reserve prediction for an exactly linear decoder to preserve transparent attribution.

We emphasize that REFINE offers a distinct route to interpretability compared to other approaches that append a linear explanation layer to an otherwise flexible predictor. Methods such as concept bottleneck \citep{Koh20} and varying-coefficient models \citep{Hastie93} typically construct intermediate representations or instance-specific effects that are not constrained to remain item-aligned, and therefore answer a different interpretive question than ours. By contrast, REFINE uses nonlinearity only to stabilize measurements in the original item space and then expresses the prognostic relationship as an exactly linear map. Relative to AICNN, this separation avoids an inherent accuracy--interpretability trade-off because the conditional mean admits an exact decomposition into a nonlinear preprocessor and a linear decoder. As a result, REFINE \emph{simultaneously} achieves Bayes-optimal predictive performance and linear interpretability.

REFINE also has limitations that motivate future work. First, our current implementation estimates $\beta_t$ by inverting $B_t$ separately at each time point, without borrowing strength across $t$; while this choice supports root-$n$ consistent estimation, it may increase variance when follow-up sample sizes are uneven or $B_t$ is ill-conditioned. Structured sharing across time (e.g., shrinkage toward a common decoder) could improve robustness while preserving exact linear decoding. Second, we instantiate $h_t$ with random forests, which are well suited to the small-to-moderate tabular regimes typical of questionnaire studies; larger datasets or richer covariates may benefit from alternative nonlinear learners, including deep models. Third, although we define $B_t$ via least-squares reconstruction, our characterization results allow broader linear operators; exploring regularized or domain-informed choices of $B_t$ may improve conditioning, sample efficiency, and robustness. Future work should therefore prioritize principled ways to borrow strength across time points, evaluate alternative nonlinear estimators of $h_t$ that preserve item alignment, and investigate regularized or domain-informed constructions of $B_t$ that improve conditioning and robustness.

In summary, REFINE offers a new, clinically compatible strategy for interpretable prediction: learn preprocessing from repeated-measurement redundancy, and ensure global interpretability by keeping the prognostic relationship exactly linear.

\section*{Statements and Declarations}

\subsection*{Competing Interests} The author declares no competing interests.

\section*{Author Contributions}
E.V.S. conceived the study, developed the methodology and software, performed all analyses, prepared the figures, interpreted the results, and wrote and revised the manuscript.

\subsection*{Funding} No external funding was received for this study.

\subsection*{Data Availability} NAPLS-3 and STAR*D data are available to qualified investigators through the NIMH Data Archive (NDA) under controlled access. ABCD Study data are available to qualified investigators through the NIH Brain Development Cohorts (NBDC) Data Hub under controlled access. Code implementing REFINE is publicly available at \url{https://github.com/ericstrobl/REFINE}.

\bibliography{biblio}

\appendix

\numberwithin{equation}{section}

\section{Appendix}
\subsection{Proofs} \label{sup:proofs}

\begin{reptheorem}{thm:preproc}
Assume $B_t$ and $\mathrm{Cov}(m_t(\bm X_0,\bm Z))$ are invertible, where
$m_t(\bm X_0,\bm Z):=\mathbb E(\bm X_t\mid \bm X_0,\bm Z)$.
Let $g:\mathbb R^{d+q}\to\mathbb R^{d}$ denote the preprocessor and let
$\beta\in\mathbb R^{d\times d}$. Then
$g(\bm X_0,\bm Z)\beta = m_t(\bm X_0,\bm Z)$ almost surely (a.s.)
if and only if $B_t\beta$ is invertible and
\[
g(\bm X_0,\bm Z) = m_t(\bm X_0,\bm Z)\,(B_t\beta)^{-1}B_t \quad\text{a.s.}
\]
\end{reptheorem}

\begin{proof}
Assume $g(\bm X_0,\bm Z)\beta=m_t(\bm X_0,\bm Z)$ almost surely. Since $\mathrm{Cov}(m_t(\bm X_0,\bm Z))$ is invertible, $m_t(\bm X_0,\bm Z)$ cannot lie in a proper subspace of $\mathbb R^d$ almost surely. Hence $\beta$ must be invertible, and therefore
\[
g(\bm X_0,\bm Z)=m_t(\bm X_0,\bm Z)\,\beta^{-1}\quad\text{a.s.}
\]
Because $B_t$ is invertible, $B_t\beta$ is invertible as well and
\[
m_t(\bm X_0,\bm Z)\,\beta^{-1}
=
m_t(\bm X_0,\bm Z)\,(B_t\beta)^{-1}B_t,
\]
so the stated representation follows. The converse is immediate by multiplication.
\end{proof}

\begin{repcorollary}{cor:unique}
Under the assumptions of Theorem~\ref{thm:preproc} and $B_t \beta = I_d$, Bayes recovery selects a unique pair $(g,\beta)$, namely $(h_t,B_t^{-1})$.
\end{repcorollary}

\begin{proof}
By Theorem~\ref{thm:preproc}, any Bayes-recovering pair satisfies
\[
g(\bm X_0,\bm Z)
=
m_t(\bm X_0,\bm Z)\,(B_t\beta)^{-1}B_t.
\]
With \(B_t\beta=I_d\), we have
\[
\beta = B_t^{-1},
\qquad
g(\bm X_0,\bm Z)=m_t(\bm X_0,\bm Z)B_t.
\]
We also have $m_t(\bm X_0,\bm Z)B_t
=
\mathbb E(\bm X_t\mid \bm X_0,\bm Z)B_t
=
\mathbb E(\bm X_tB_t\mid \bm X_0,\bm Z)
=
h_t(\bm X_0,\bm Z).$
So \((g,\beta)=(h_t,B_t^{-1})\) uniquely.
\end{proof}

\begin{replemma}{lem:invbeta}
Assume the following moment and regularity conditions for a fixed $t$: 
(i) $\mathbb{E}\|h_i\|_2^4<\infty$, $\mathbb{E}\|U_i\|_2^4<\infty$ and $\mathbb E(U_i\mid \bm X_{0,i},\bm Z_i)=0$; 
(ii) $\mathbb{E}(h_i^\top h_i)$ is positive definite; 
(iii) $a_{n,t} \in o_p(1)$; and
(iv) $\mathbb{E}\|\bm X_{0,i}\|_2^4<\infty$, $\mathbb{E}\|\bm X_{t,i}\|_2^4<\infty$, and $\mathbb{E}(\bm X_{t,i}^\top \bm X_{t,i})$ is positive definite.
Then:
\[
\|\widehat\beta_{t,\mathrm{OLS}}-\beta_t\|_F
\in
O_p\!\big(n_t^{-1/2} + a_{n,t}\big),
\qquad
\|\widehat\beta_{t,\mathrm{inv}}-\beta_t\|_F
\in
O_p(n_t^{-1/2}).
\]
\end{replemma}
\begin{proof}
For the first expression, we can write
\[
\widehat\beta_{t,\mathrm{OLS}}-\beta_t
=
\Big(\tfrac{1}{n_t}\widehat H_t^\top \widehat H_t\Big)^{-1}
\Big(\tfrac{1}{n_t}\widehat H_t^\top \bm X_t\Big)
-\beta_t.
\]
Inserting the identity
\[
\beta_t
=
\Big(\tfrac{1}{n_t}\widehat H_t^\top \widehat H_t\Big)^{-1}
\Big(\tfrac{1}{n_t}\widehat H_t^\top \widehat H_t\Big)\beta_t
\]
and subtracting gives
\[
\widehat\beta_{t,\mathrm{OLS}}-\beta_t
=
\Big(\tfrac{1}{n_t}\widehat H_t^\top \widehat H_t\Big)^{-1}
\Big(\tfrac{1}{n_t}\widehat H_t^\top(\bm X_t-\widehat H_t\beta_t)\Big).
\]
Define the residual
$U_{\widehat h,t}:=\bm X_t-\widehat H_t\beta_t$
so that
\[
\widehat\beta_{t,\mathrm{OLS}}-\beta_t
=
\Big(\tfrac{1}{n_t}\widehat H_t^\top \widehat H_t\Big)^{-1}
\Big(\tfrac{1}{n_t}\widehat H_t^\top U_{\widehat h,t}\Big).
\]
Write $\widehat H_t=H_t+R_t$ and substitute:
\[
U_{\widehat h,t}
=
\bm X_t - \widehat H_t \beta_t
=
\bm X_t - (H_t + R_t)\beta_t
=
(\bm X_t - H_t\beta_t) - R_t\beta_t.
\]
Let  
$U_t := \bm X_t - H_t\beta_t$ denote the $n_t\times d$ residual matrix, so
$U_{\widehat h,t} = U_t - R_t\beta_t.$ The $i$th row of $U_t$ equals
$U_i^\top=(\bm X_{t,i}-h_i\beta_t)^\top$. Expanding yields
\[
\tfrac{1}{n_t}\widehat H_t^\top U_{\widehat h,t}
=
\tfrac{1}{n_t}H_t^\top U_t
+\tfrac{1}{n_t}R_t^\top U_t
-\tfrac{1}{n_t}H_t^\top R_t\,\beta_t
-\tfrac{1}{n_t}R_t^\top R_t\,\beta_t.
\]
The first term is $O_p(n_t^{-1/2})$ by the central limit theorem since
$\mathbb E(U_i\mid \bm X_{0,i},\bm Z_i)=0$ and, under Assumption~(i),
$\mathbb E\|h_i\|_2^4<\infty$ and $\mathbb E\|U_i\|_2^4<\infty$.

For the remaining terms, by Cauchy--Schwarz,
\begin{align*}
\Big\|\tfrac{1}{n_t}R_t^\top U_t\Big\|_F
&\le
\|R_t\|_{2,n_t}\,\|U_t\|_{2,n_t},\\
\Big\|\tfrac{1}{n_t}H_t^\top R_t\Big\|_F
&\le
\|H_t\|_{2,n_t}\,\|R_t\|_{2,n_t},\\
\Big\|\tfrac{1}{n_t}R_t^\top R_t\Big\|_F
&\leq
\|R_t\|_{2,n_t}^2.
\end{align*}
Under the stated fourth-moment assumptions, $\|H_t\|_{2,n_t}^2
=
\frac{1}{n_t}\sum_{i=1}^{n_t}\|h_i\|_2^2 \in O_p(1)$ and $\|U_t\|_{2,n_t}^2
=
\frac{1}{n_t}\sum_{i=1}^{n_t}\|U_i\|_2^2\in O_p(1)$, while
$\|R_t\|_{2,n_t}=a_{n,t}$ by definition. Hence
\[
\Big\|\tfrac{1}{n_t}R_t^\top U_t\Big\|_F \in O_p(a_{n,t}),\qquad
\Big\|\tfrac{1}{n_t}H_t^\top R_t\Big\|_F \in O_p(a_{n,t}),\qquad
\Big\|\tfrac{1}{n_t}R_t^\top R_t\Big\|_F \in O_p(a_{n,t}^2).
\]
Moreover, since $\mathbb E(h_i^\top h_i)$ is positive definite by Assumption (ii) and $a_{n,t} \in o_p(1)$ by Assumption (iii), we have
$\big\|(\tfrac{1}{n_t}\widehat H_t^\top\widehat H_t)^{-1}\big\|_{\mathrm{op}}\in O_p(1)$.
Combining the above bounds gives
\[
\|\widehat\beta_{t,\mathrm{OLS}}-\beta_t\|_F
\in
O_p\!\big(n_t^{-1/2}+a_{n,t}+a_{n,t}^2\big)
\in
O_p\!\big(n_t^{-1/2}+a_{n,t}\big).
\]

For the second expression, $\widehat B_t$ is an ordinary least-squares estimator for the regression of $\bm X_0$ on $\bm X_t$, hence
$\|\widehat B_t-B_t\|_F\in O_p(n_t^{-1/2})$ under Assumption (iv). By the continuous mapping theorem $\widehat B_t^{-1}\to_p B_t^{-1}$ and hence
$\|\widehat B_t^{-1}\|_{\mathrm{op}} \in O_p(1)$.
Moreover,
\[
\widehat B_t^{-1}-B_t^{-1}
=
\widehat B_t^{-1}\bigl(B_t-\widehat B_t\bigr)B_t^{-1},
\]
so
\[
\|\widehat B_t^{-1}-B_t^{-1}\|_F
\le
\|\widehat B_t^{-1}\|_{\mathrm{op}}\,
\|\widehat B_t-B_t\|_F\,
\|B_t^{-1}\|_{\mathrm{op}}
=
O_p(n_t^{-1/2}).
\] 
Since $\widehat\beta_{t,\mathrm{inv}}=\widehat B_t^{-1}$ and $\beta_t=B_t^{-1}$, we conclude
$\|\widehat\beta_{t,\mathrm{inv}}-\beta_t\|_F=O_p(n_t^{-1/2})$
\end{proof}

\begin{reptheorem}{thm:convergence_rate}
Consider the same assumptions as Lemma~\ref{lem:invbeta} for a fixed $t$, and let
$\widehat\beta_t=\widehat\beta_{t,\mathrm{inv}}=\widehat B_t^{-1}$. Then
\[
\|\widehat H_t\,\widehat\beta_t - H_t\beta_t\|_{2,n_t}
\in
O_p\!\big(n_t^{-1/2}+a_{n,t}\big).
\]
\end{reptheorem}

\begin{proof}
Write $\widehat\beta_t = \beta_t + \Delta_\beta$ with $\Delta_\beta:=\widehat\beta_t-\beta_t$.
Using $\widehat H_t=H_t+R_t$,
\[
\widehat H_t\,\widehat\beta_t - H_t\beta_t
= R_t\beta_t + H_t\Delta_\beta + R_t\Delta_\beta.
\]
Applying the triangle inequality and $\|AB\|_{2,n_t}\le \|A\|_{2,n_t}\|B\|_{\mathrm{op}}$ gives
\[
\|\widehat H_t\,\widehat\beta_t - H_t\beta_t\|_{2,n_t}
\le
\|R_t\|_{2,n_t}\|\beta_t\|_{\mathrm{op}}
+\|H_t\|_{2,n_t}\|\Delta_\beta\|_{\mathrm{op}}
+\|R_t\|_{2,n_t}\|\Delta_\beta\|_{\mathrm{op}}.
\]
By definition $\|R_t\|_{2,n_t}=a_{n,t}$, and $\|H_t\|_{2,n_t}\in O_p(1)$ by Assumption (i).
Moreover, Lemma~\ref{lem:invbeta} yields $\|\Delta_\beta\|_{\mathrm{op}}\in O_p(n_t^{-1/2})$ for the inversion estimator.
Therefore the right-hand side is
$O_p\!\big(a_{n,t}+n_t^{-1/2}+a_{n,t}n_t^{-1/2}\big)=O_p\!\big(a_{n,t}+n_t^{-1/2}\big)$.
\end{proof}

\subsection{Time Complexity Analysis} \label{supp:time_complex}

Let \(T\) denote the number of follow-up times, let \(d\) be the number of questionnaire items, let \(q\) be the number of additional baseline covariates included in \(\bm Z\), and let \(p=d+q\) be the input dimension of the preprocessing map \(h_t\). For follow-up time \(t\), let \(n_t\) denote the number of subjects with both baseline and follow-up measurements. In the first stage, REFINE estimates \(B_t\) by (centered) least squares, which requires forming cross-products and then solving a \(d\times d\) linear system. Forming the cross-products costs \(O(n_t d^2)\), while solving for \(B_t\) costs \(O(d^3)\). The implementation then computes the inverse of \(B_t\) which costs at most \(O(d^3)\). Thus, the linear calibration step at each time point has complexity \(O(n_t d^2 + d^3)\).

After estimating \(B_t\), REFINE constructs the supervised proxy target \(\widebar{\bm X}_0^{(t)} = \bm X_t B_t\), which requires a matrix multiplication of cost \(O(n_t d^2)\). It then fits the nonlinear preprocessor \(h_t:(\bm X_0,\bm Z)\mapsto \widebar{\bm X}_0^{(t)}\) on \(n_t\) samples with input dimension \(p\) and output dimension \(d\). To keep the analysis independent of the specific nonlinear learner (e.g., random forests, boosting, or neural networks), let \(\mathcal{C}_{\mathrm{fit}}(n_t,p,d)\) denote the training time of the chosen multivariate nonlinear regression method. The total training complexity of REFINE is therefore
\[
\sum_{t=1}^{T} \Big(O(n_t d^2 + d^3) + \mathcal{C}_{\mathrm{fit}}(n_t,p,d)\Big),
\]
where the \(O(n_t d^2)\) term includes both least-squares cross-products and proxy construction. Let $n = \max_t n_t$, so that the above expression simplifies to
\[
O\!\left(T\big(n d^2 + d^3 + \mathcal{C}_{\mathrm{fit}}(n,p,d)\big)\right).
\]
We implement REFINE using random forest, where
\[
\mathcal{C}_{\mathrm{fit}}(n,p,d)
=
O\!\big(S\, \sqrt{p}\, d\, n\, \log n\big),
\]
and \(S\) is the number of trees. We conclude that our implementation of REFINE has time complexity
\[
O\!\left(T\big(n d^2 + d^3 + S\, \sqrt{p}\, d\, n\, \log n\big)\right).
\]

\subsection{Additional Results} \label{supp:ABCD}

We evaluated all algorithms on the Adolescent Brain Cognitive Development (ABCD) Study, using baseline physical health measures to predict longitudinal health \citep{Garavan18}. The target variables included anthropometric measures (height, weight, waist circumference), cardiovascular measures (systolic blood pressure, diastolic blood pressure, heart rate), and pubertal indicators (growth spurt, body hair, acne, blood DHEA, and blood testosterone). A total of 3,114 adolescents had complete baseline data and were reassessed every two years over a six-year period. The prediction task was to estimate future values of these measures from baseline observations.

We plot a heatmap of standardized REFINE coefficients for predicting year-two adolescent health measures from baseline measurements (Figure~\ref{fig:ABCD_heatmap}). Height and weight were by far the strongest predictors. Baseline height predicted later pubertal development measures and waist circumference, as expected given the close relationship between linear growth and pubertal maturation \cite{Tanner76}. In ABCD, pubertal stage variables were derived from the Pubertal Development Scale: stage 1 indicates prepubertal development, stage 2 indicates early puberty, and stage 3 indicates mid-puberty. As a sanity check, earlier pubertal-development stages also predicted later pubertal-development stages, reflecting the expected ordinal progression of pubertal maturation \cite{Herting21}. Interestingly, after accounting for height and waist circumference, baseline weight had negative coefficients for transitions into later pubertal-development stages. This pattern is plausible given prior evidence that adiposity has complex and sex-dependent associations with pubertal timing, including earlier pubertal development in girls but delayed or altered pubertal progression in boys \cite{Burt10}.

\begin{figure}
    \centering
    \includegraphics[width=0.75\linewidth]{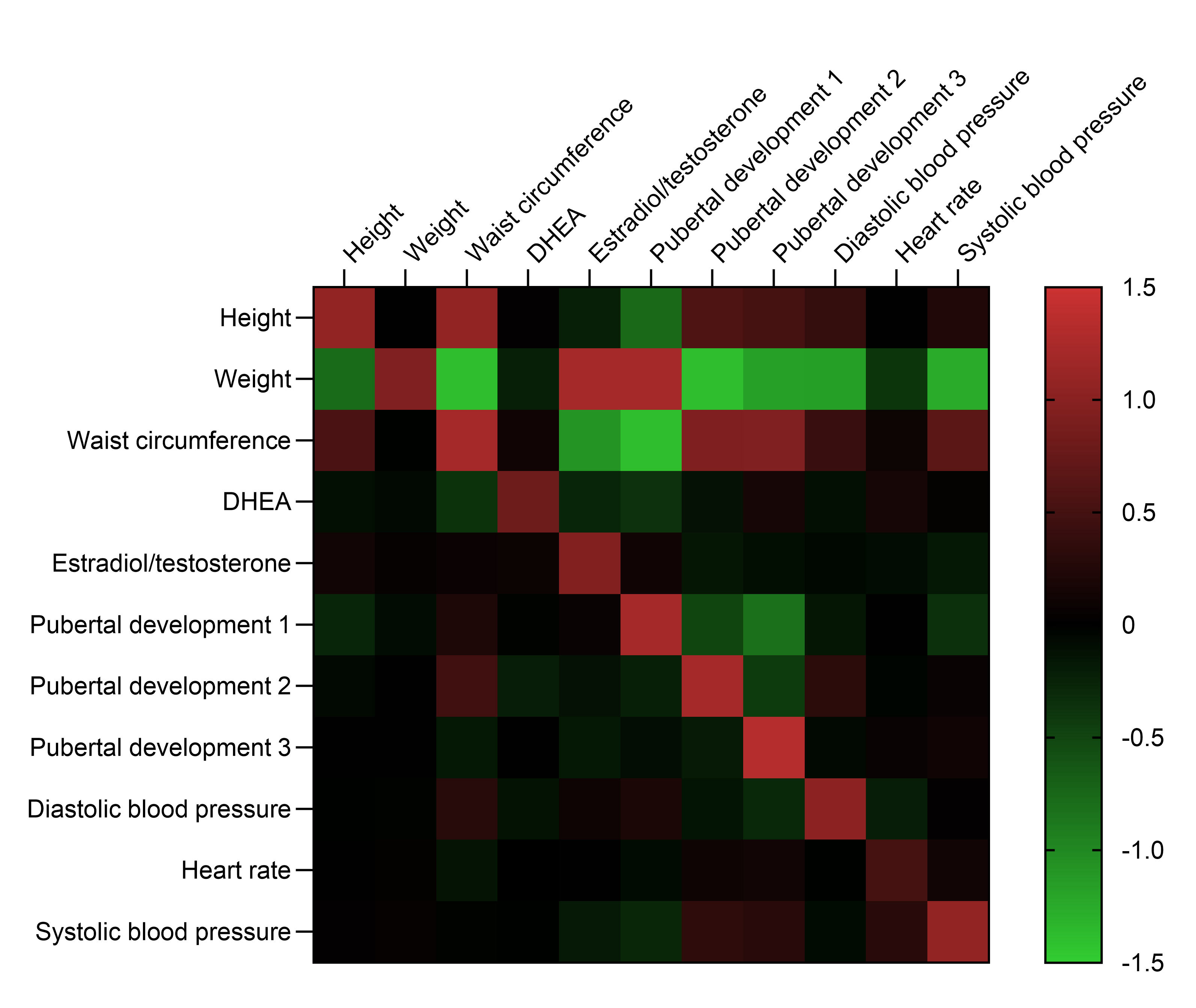}
    \caption{\textbf{Example heatmap from ABCD.}
Standardized REFINE decoder coefficients for predicting year-two adolescent health outcomes from baseline measures.}
    \label{fig:ABCD_heatmap}
\end{figure}

Figure~\ref{fig:ABCD} summarizes the results. REFINE achieved the second-highest forward correlation, narrowly trailing MGCV (Figure~\ref{fig:ABCD}a). REFINE substantially outperformed MGCV and all other methods in backward correlation (Figure~\ref{fig:ABCD}b), which moved REFINE onto the Pareto frontier (correlation in Figure~\ref{fig:ABCD}c, RMSE in Figure~\ref{fig:ABCD}d). Changing the nonlinear learner yielded similar results (Figure~\ref{fig:nonlin_swap}c). REFINE also achieved the highest cosine similarity at the first time point by a wide margin (Figure~\ref{fig:ABCD}e), indicating the strongest preservation of item-level meaning. In addition, the inverse decoder attained only modest realized amplification across all time steps (Figure~\ref{fig:ABCD}f), and REFINE had the lowest runtime among all methods (Figure~\ref{fig:ABCD}g). Taken together, these results indicate that REFINE provides the best overall trade-off among predictive accuracy, interpretability preservation, and computational efficiency.

\begin{figure}[!b]
\centering
\includegraphics[width=1\linewidth]{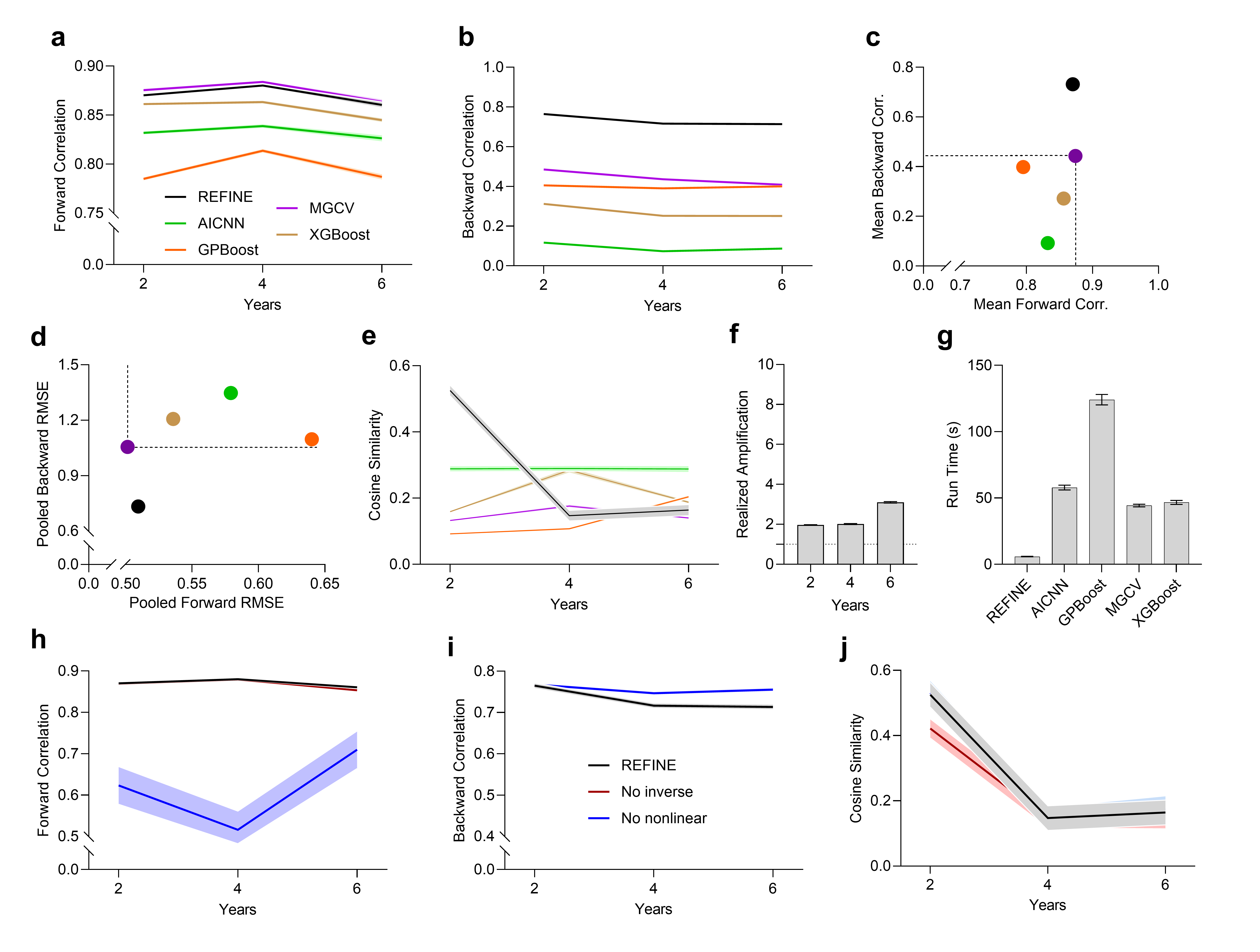}
\caption{\textbf{ABCD results across 6 years.} (a) Forward correlation,
(b) backward correlation,
(c) correlation Pareto plot,
(d) RMSE Pareto plot,
(e) cosine similarity,
(f) realized amplification,
(g) runtime per method,
(h) forward correlation of ablations,
(i) backward correlation of ablations,
(j) cosine similarity of ablations.}
\label{fig:ABCD}
\end{figure}

The ablation results showed a similar pattern. Removing the nonlinear component caused a substantial drop in forward correlation (Figure~\ref{fig:ABCD}h), highlighting the importance of nonlinear preprocessing for predictive performance. At the same time, all ablations maintained high backward correlation (Figure~\ref{fig:ABCD}i), indicating that nonlinear preprocessing does not compromise the ability to preserve item-level meaning. Finally, explicitly inverting the $B_t$ matrix consistently improved preservation of item-level meaning relative to fitting a final linear regression model on the nonlinear outputs (Figure~\ref{fig:ABCD}j). Overall, the ABCD results closely replicate the patterns observed in the NAPLS-3 and STAR*D datasets, supporting the conclusion that REFINE generalizes beyond psychiatric questionnaires to routine pediatric and public-health settings.

\begin{figure}[htbp]
    \centering
    \includegraphics[width=0.9\linewidth]{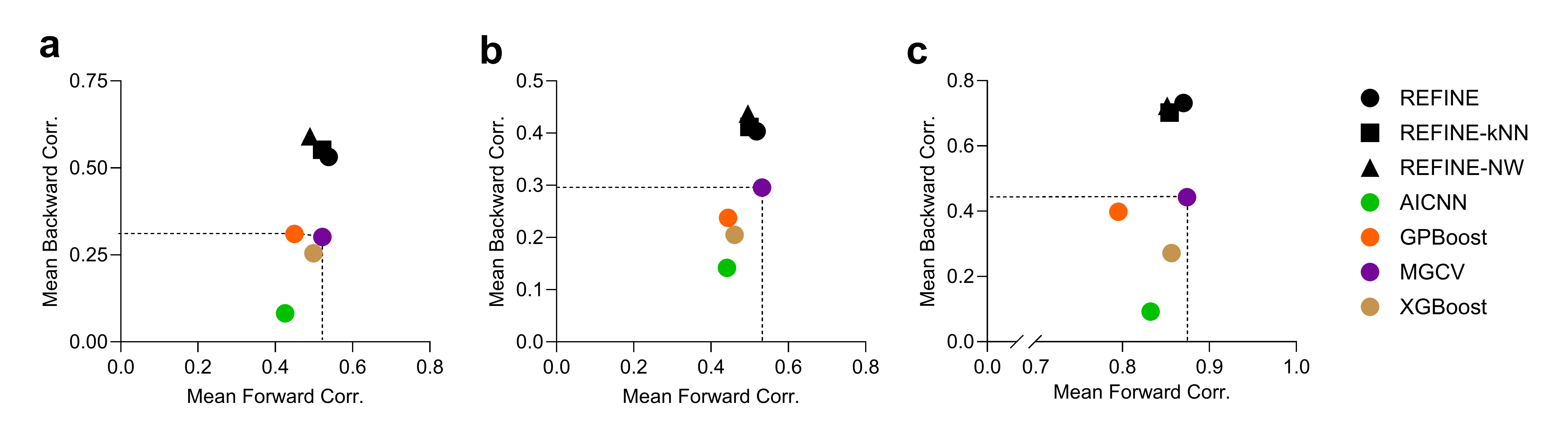}
    \caption{\textbf{Nonlinear learner sensitivity.} We replaced REFINE's default random forest preprocessor with two non-extrapolating nonlinear learners: bagged $k$-nearest neighbors and bagged Nadaraya--Watson estimators. Pareto frontier plots were similar across all three datasets: (a) NAPLS-3, (b) STAR*D, and (c) ABCD. This pattern suggests that REFINE's performance did not depend strongly on the specific nonlinear learner used for preprocessing.}
    \label{fig:nonlin_swap}
\end{figure}

\end{document}